\newtheorem{theorem}{Theorem}
\newtheorem{lemma}{Lemma}
\DeclareFixedFont{\myfont}{OT1}{ptm}{m}{n}{7pt}
\DeclareFixedFont{\myfontb}{OT1}{ptm}{bx}{n}{7pt}
\def \zerovec {\boldsymbol{0}}
\def \evec {\boldsymbol{e}}
\def \gvec {\boldsymbol{g}}
\def \qvec {\boldsymbol{q}}
\def \uvec {\boldsymbol{u}}
\def \wvec {\boldsymbol{w}}
\def \xvec {\boldsymbol{x}}
\def \yvec {\boldsymbol{y}}
\def \zvec {\boldsymbol{z}}
\def \Amat {\boldsymbol{A}}
\def \Gmat {\boldsymbol{G}}
\def \Hmat {\boldsymbol{H}}
\def \Imat {\boldsymbol{I}}
\def \Qmat {\boldsymbol{Q}}
\def \Xmat {\boldsymbol{X}}
\def \Ymat {\boldsymbol{Y}}
\def \alphavec {\boldsymbol{\alpha}}
\def \betavec {\boldsymbol{\beta}}
\def \xivec {\boldsymbol{\xi}}
\def \epsilonvec {\boldsymbol{\epsilon}}
\def \zetavec {\boldsymbol{\zeta}}
\def \thetavec {\boldsymbol{\theta}}
\def \Dcal {\mathcal{D}}
\def \Lcal {\mathcal{L}}
\def \Xcal {\mathcal{X}}
\def \Ycal {\mathcal{Y}}
\def \Ebb {\mathbb{E}}
\def \Rbb {\mathbb{R}}
\def \wvecbar {\bar{\boldsymbol{w}}}
\def \muvecbar {\bar{\boldsymbol{\mu}}}
\def \argmin {\mathop{\mbox{argmin}}}
\def \st {\mbox{s.t.}}
\journal{Artificial Intelligence Journal}
\begin{document}

\begin{frontmatter}
\title{Optimal Margin Distribution Machine}
\author{Teng Zhang}
\author{Zhi-Hua Zhou\corref{cor1}}
\address{National Key Laboratory for Novel Software Technology,\\
Collaborative Innovation Center of Novel Software Technology and Industrialization\\
Nanjing University, Nanjing 210023, China}
\cortext[cor1]{Email: zhouzh@lamda.nju.edu.cn}

\begin{abstract}
Support vector machine (SVM) has been one of the most popular learning algorithms, with the central idea of maximizing the \textit{minimum margin}, i.e., the smallest distance from the instances to the classification boundary. Recent theoretical results, however, disclosed that maximizing the minimum margin does not necessarily lead to better generalization performances, and instead, the margin distribution has been proven to be more crucial. Based on this idea, we propose a new method, named Optimal margin Distribution Machine (ODM), which tries to achieve a better generalization performance by optimizing the margin distribution. We characterize the margin distribution by the first- and second-order statistics, i.e., the margin mean and variance. The proposed method is a general learning approach which can be used in any place where SVM can be applied, and their superiority is verified both theoretically and empirically in this paper.
\end{abstract}

\begin{keyword}
margin \sep margin distribution \sep minimum margin \sep classification
\end{keyword}
\end{frontmatter}

\section{Introduction}

Support Vector Machine (SVM)~\citep{Cortes1995Support,Vapnik1995The} has always been one of the most successful learning algorithms. The basic idea is to identify a classification boundary having a large margin for all the training examples, and the resultant optimization can be accomplished by a quadratic programming (QP) problem. Although SVMs have a long history of literatures, there are still great efforts~\citep{Lacoste2013Block,Learning2013Cotter,Takac2013Mini,Jose2013Local,Convex2013Do} on improving SVMs.

It is well known that SVMs can be viewed as a learning approach trying to maximize the \textit{minimum margin} of training examples, i.e., the smallest distance from the examples to the classification boundary, and the margin theory~\citep{Vapnik1995The} provided a good support to the generalization performance of SVMs. It is noteworthy that the margin theory not only plays an important role for SVMs, but also has been extended to interpret the good generalization of many other learning approaches, such as AdaBoost~\citep{Freund1995A}, a major representative of ensemble methods~\citep{Zhou2012Ensemble}. Specifically, Schapire et al.~\citep{Schapire1998Boosting} first suggested margin theory to explain the phenomenon that AdaBoost seems resistant to overfitting; soon after, Breiman~\citep{Breiman1999Prediction} indicated that the minimum margin is crucial and developed a boosting-style algorithm, named Arc-gv, which is able to maximize the minimum margin but with a poor generalization performance. Later, Reyzin et al.~\citep{Reyzin2006How} found that although Arc-gv tends to produce larger minimum margin, it suffers from a poor margin distribution; they conjectured that the margin distribution, rather than the minimum margin, is more crucial to the generalization performance. Such a conjecture has been theoretically studied~\citep{Wang2011A,Gao2012On}, and it was recently proven by Gao and Zhou~\citep{Gao2012On}. Moreover, it was disclosed that rather than simply considering a single-point margin, both the margin mean and variance are important~\citep{Gao2012On, Zhou2014Large}. All these theoretical studies, however, focused on boosting-style algorithms, whereas the influence of the margin distribution for SVMs in practice has not been well exploited.

In this paper, we propose a new method, named Optimal margin Distribution Machine (ODM), which tries to achieve strong generalization performance by optimizing the margin distribution. Inspired by the recent theoretical result~\citep{Gao2012On}, we characterize the margin distribution by the first- and second-order statistics, and try to maximize the margin mean and minimize the margin variance simultaneously. For optimization, we propose a dual coordinate descent method for kernel ODM, and a stochastic gradient descent with variance reduction for large scale linear kernel ODM. Comprehensive experiments on thirty two regular scale data sets and ten large scale data sets show the superiority of our method to SVM and some other state-of-the-art methods, verifying that the margin distribution is more crucial for SVM-style learning approaches than minimum margin.

A preliminary version of this work appeared in a conference paper~\citep{Zhang2014Large}. Compared with the original version, the new approach has a simpler formulation and is more comprehensible. In addition, it avoids the operation of matrix inverse, so it can be more efficient when nonlinear kernels are applied. We also give a new theoretical analysis for the proposed algorithm, and present better empirical performance. The preliminary version was called LDM (large margin distribution machine), but it is not proper to call a better margin distribution as a ``larger'' one; thus we now call it ODM (optimal margin distribution learning machine), and the algorithm described in~\citep{Zhang2014Large} is called as ODM$^L$ in this paper.

The rest of this paper is organized as follows. Section~\ref{sec: preliminaries} introduces some preliminaries. Section~\ref{sec: Formulation} and~\ref{sec: optimization} present the formulation and optimization of ODM$^L$ and ODM respectively. Section~\ref{sec: analysis} present the theoretical analysis. Section~\ref{sec: experiments} reports on our experiments. Section~\ref{sec: related work} discusses about some related works. Finally, Section~\ref{sec: conclusions} concludes.

\section{Preliminaries} \label{sec: preliminaries}

We denote by $\Xcal \in \Rbb^d$ the instance space and $\Ycal =\{+1,-1\}$ the label set. Let $\Dcal$ be an unknown (underlying) distribution over $\Xcal \times \Ycal$. A training set of size $m$
\begin{align*}
S = \{(\xvec_1, y_1), (\xvec_2, y_2), \ldots, (\xvec_m, y_m)\},
\end{align*}
is drawn identically and independently (i.i.d.) according to the distribution $\Dcal$. Our goal is to learn a function which is used to predict the labels for future unseen instances.

For SVMs, $f$ is regarded as a linear model, i.e., $f(\xvec) = \wvec^\top \phi(\xvec)$ where $\wvec$ is a linear predictor, $\phi(\xvec)$ is a feature mapping of $\xvec$ induced by some kernel $k$, i.e., $k(\xvec_i, \xvec_j) = \phi(\xvec_i)^\top \phi(\xvec_j)$. According to~\citep{Cortes1995Support,Vapnik1995The}, the margin of instance $(\xvec_i, y_i)$ is formulated as
\begin{align} \label{defn: margin}
\gamma_i = y_i \wvec^\top \phi(\xvec_i), \ \forall i = 1,\ldots, m.
\end{align}

It can be found that in separable cases where the training examples can be separated with the zero error, all the $\gamma_i$ will be non-negative. By scaling it with $1 / \|\wvec\|$, we can get the geometric distance from $(\xvec_i, y_i)$ to $\wvec^\top \phi(\xvec)$, i.e.,
\begin{align*}
\hat{\gamma}_i = y_i \frac{\wvec^\top}{\|\wvec\|} \phi(\xvec_i), \ \forall i = 1, \ldots, m.
\end{align*}
From~\citep{Cristianini2000introduction}, it is shown that SVM with hard-margin (or Hard-margin SVM) is regarded as the maximization of the minimum distance,
\begin{align*}
\max_{\wvec} & \ \ \hat{\gamma} \\
\mbox{s.t.} & \ \ y_i \frac{\wvec^\top}{\|\wvec\|} \phi(\xvec_i) \geq \hat{\gamma}, \ i = 1, \ldots, m.
\end{align*}
It can be rewritten as
\begin{align*}
\max_{\wvec} & \ \ \frac{\gamma}{\|\wvec\|} \\
\mbox{s.t.} & \ \ y_i \wvec^\top \phi(\xvec_i) \geq \gamma, \ i = 1, \ldots, m.
\end{align*}
Since the value of $\gamma$ doesn't have influence on the optimization, we can simply set it as 1. Note that maximizing $1 / \|\wvec\|$ is equivalent to minimizing $\|\wvec\|^2/2$, we can get the classic formulation of Hard-margin SVM as follows:
\begin{align*}
\min_{\wvec} & \ \ \frac{1}{2} \wvec^\top \wvec \\
\mbox{s.t.} & \ \ y_i \wvec^{\top} \phi(\xvec_i) \geq 1, \ i = 1, \ldots, m.
\end{align*}

In non-separable cases where the training examples cannot be separated with the zero error, SVM with soft-margin (or Soft-margin SVM) is posed,
\begin{align} \label{eq: soft-margin SVM}
\begin{split}
\min_{\wvec, \xi_i} & \ \ \frac{1}{2} \wvec^\top \wvec + \frac{C}{m} \sum_{i=1}^{m} \xi_i \\
\mbox{s.t.} & \ \ y_i \wvec^{\top} \phi(\xvec_i) \geq 1 - \xi_i, \\
& \ \ \xi_i \geq 0, \ i = 1, \ldots, m.
\end{split}
\end{align}
where $\xi_i = [\xi_1, \ldots, \xi_m]^\top$ are slack variables which measure the losses of instances, and $C$ is a trading-off parameter. There exists a constant $\bar{C}$ such that (\ref{eq: soft-margin SVM}) can be equivalently reformulated as,
\begin{align*}
\max_{\wvec} & \ \ \gamma_0 - \frac{\bar{C}}{m} \sum\nolimits_{i=1}^{m} \xi_i \\
\mbox{s.t.} & \ \ \gamma_i \geq \gamma_0 - \xi_i,\\ & \ \ \xi_i \geq 0, \ i = 1, \ldots, m.
\end{align*}
where $\gamma_0$ is a relaxed minimum margin, and $\bar{C}$ is the trading-off parameter. Note that $\gamma_0$ indeed characterizes the top-$p$ minimum margin~\citep{Gao2012On}; hence, SVMs (with both hard-margin and soft-margin) consider only a single-point margin and have not exploited the whole margin distribution.

\section{Formulation} \label{sec: Formulation}

The two most straightforward statistics for characterizing the margin distribution are the first- and second-order statistics, that is, the mean and the variance of the margin. Formally, denote $\Xmat$ as the matrix whose $i$-th column is $\phi(\xvec_i)$, i.e., $\Xmat = [\phi(\xvec_1), \ldots, \phi(\xvec_m)]$, $\yvec = [y_1, \ldots, y_m]^\top$ is a column vector, and $\Ymat$ is a $m \times m$ diagonal matrix with $y_1, \ldots, y_m$ as the diagonal elements. According to the definition in (\ref{defn: margin}), the margin mean is
\begin{align} \label{eq: margin mean}
\bar{\gamma} = \frac{1}{m}\sum_{i=1}^{m} y_i \wvec^\top \phi(\xvec_i) = \frac{1}{m} (\Xmat \yvec)^\top \wvec,
\end{align}
and the margin variance is
\begin{align} \label{eq: margin variance}
\begin{split}
\hat{\gamma} & = \frac{1}{m} \sum_{i=1}^m (y_i \wvec^\top \phi(\xvec_i) - \bar{\gamma})^2    \\
& = \frac{1}{m} \wvec^\top \sum_{i=1}^m \phi(\xvec_i) \phi(\xvec_i)^\top \wvec - \frac{2}{m} \sum_{i=1}^m y_i \wvec^\top \phi(\xvec_i) \bar{\gamma} + \bar{\gamma}^2 \\
& = \frac{1}{m} \wvec^\top \Xmat \Xmat^\top \wvec - \frac{1}{m^2} \wvec^\top \Xmat \yvec \yvec^\top \Xmat^\top \wvec \\
& = \wvec^\top \Xmat \frac{m \Imat - \yvec \yvec^\top}{m^2} \Xmat^\top \wvec
\end{split}
\end{align}
where $\Imat$ is the identity matrix. Inspired by the recent theoretical result~\citep{Gao2012On}, we attempt to maximize the margin mean and minimize the margin variance simultaneously.

\subsection{ODM$^L$} \label{sec: LDM}

First consider a simpler way, i.e., adding the margin mean $\bar{\gamma}$ and the margin variance $\hat{\gamma}$ to the objective function of SVM explicitly. Then in the separable cases where the training examples can be separated with the zero error, the maximization of the margin mean and the minimization of the margin variance leads to the following hard-margin ODM$^L$,
\begin{align*}
\min_{\wvec} & \ \ \frac{1}{2} \wvec^\top \wvec + \lambda_1 \hat{\gamma} - \lambda_2 \bar{\gamma} \\
\mbox{s.t.} & \ \ y_i \wvec^{\top} \phi(\xvec_i)  \geq 1, \ i = 1, \ldots, m,
\end{align*}
where $\lambda_1$ and $\lambda_2$ are the parameters for trading-off the margin variance, the margin mean and the model complexity. It's evident that the hard-margin ODM$^L$ subsumes the hard-margin SVM when $\lambda_1$ and $\lambda_2$ equal $0$.

For the non-separable cases, similar to soft-margin SVM, the soft-margin ODM$^L$ leads to
\begin{align} \label{eq: ODM$^L$}
\begin{split}
\min_{\wvec, \xi_i} & \ \ \frac{1}{2} \wvec^\top \wvec + \lambda_1 \hat{\gamma} - \lambda_2 \bar{\gamma} + \frac{C}{m} \sum_{i=1}^{m} \xi_i \\
\mbox{s.t.}  & \ \ y_i \wvec^{\top} \phi(\xvec_i) \geq 1 - \xi_i,\\ & \ \ \xi_i \geq 0, \ i = 1, \ldots, m.
\end{split}
\end{align}
Similarly, soft-margin ODM$^L$ subsumes the soft-margin SVM if $\lambda_1$ and $\lambda_2$ both equal $0$. Because the soft-margin SVM often performs much better than the hard-margin one, in the following we will focus on soft-margin ODM$^L$ and if without clarification, ODM$^L$ is referred to the soft-margin ODM$^L$.

\subsection{ODM} \label{sec: ODM}

The idea of ODM$^L$ is quite straightforward, however, the final formulation is a little complex. In this section, we try to propose a simpler one.

Note that SVM set the minimum margin as 1 by scaling $\|\wvec\|$, following the similar way, we can also fix the margin mean as 1. Then the deviation of the margin of $(\xvec_i, y_i)$ to the margin mean is $|y_i \wvec^{\top} \phi(\xvec_i) - 1|$. By minimizing the margin variance, we arrive at the following formulation,
\begin{align} \label{eq: first ODM}
\begin{split}
\min_{\wvec, \xi_i, \epsilon_i} & \ \ \frac{1}{2} \wvec^\top \wvec + \frac{C}{m} \sum_{i=1}^m (\xi_i^2 + \epsilon_i^2) \\
\st & \ \ y_i \wvec^\top \phi(\xvec_i) \geq 1 - \xi_i, \\
    & \ \ y_i \wvec^\top \phi(\xvec_i) \leq 1 + \epsilon_i, \ i = 1, \ldots, m.
\end{split}
\end{align}
Since the margin of $(\xvec_i, y_i)$ is either smaller or greater than the margin mean, so at most one of $\xi_i$ and $\epsilon_i$ can be positive. In addition, if one is positive, the other must be zero (otherwise if it's negative, we can set it as zero without violating any constraint but decrease the objective function value), so the second term of the objective function is the margin variance.

The hyperplane $y_i \wvec^\top \phi(\xvec_i) = 1$ divides the space into two subspaces. For each example, no matter which space it lies in, it will suffer a loss which is quadratic with the deviation. However, the examples lie in the space corresponding to $y_i \wvec^\top \phi(\xvec_i) < 1$ are much easier to be misclassified than the other. So it is more reasonable to set different weights for the loss of examples in different spaces, i.e., the second term of (\ref{eq: first ODM}) can be modified as
\begin{align*}
\frac{1}{m} \sum_{i=1}^m (C_1 \xi_i^2 + C_2 \epsilon_i^2),
\end{align*}
where $C_1$ and $C_2$ are the trading-off parameters. According to the representer theorem~\citep{Scholkopf2001learning}, the optimal solution will be spanned by the support vectors. Unfortunately, for ODM, almost all training examples are support vectors. To make the solution sparse, we introduce a $D$-insensitive loss like SVR, i.e., the examples whose deviation is smaller than $D$ are tolerated and only those whose deviation is larger than $D$ will suffer a loss. Finally, we obtain the formulation of ODM,
\begin{align} \label{eq: ODM}
\begin{split}
\min_{\wvec, \xi_i, \epsilon_i} & \ \ \frac{1}{2} \wvec^\top \wvec + \frac{1}{m} \sum_{i=1}^m (C_1 \xi_i^2 + C_2 \epsilon_i^2) \\
\st & \ \ y_i \wvec^\top \phi(\xvec_i) \geq 1 - D - \xi_i, \\
    & \ \ y_i \wvec^\top \phi(\xvec_i) \leq 1 + D + \epsilon_i, \ i = 1, \ldots, m.
\end{split}
\end{align}
where $C_1$ and $C_2$ are described previously, $D$ is a parameter for controling the number of support vectors (sparsity of the solution).

\section{Optimization} \label{sec: optimization}

We first propose a dual coordinate descent method for kernel ODM$^L$ and ODM, and then propose a stochastic gradient descent with variance reduction for large scale linear kernel ODM$^L$ and ODM.

\subsection{$ODM_{dcd}$} \label{sec: ODM_dcd}

In this section we show that the dual of kernel ODM$^L$ and ODM are both convex quadratic optimization with only simple decoupled box constraints, and then present a dual coordinate descent method $ODM_{dcd}$ to solve them uniformly.

\subsubsection{Kernel ODM$^L$} \label{sec: kernelODM_0}
By substituting (\ref{eq: margin mean})-(\ref{eq: margin variance}), (\ref{eq: ODM$^L$}) leads to the following quadratic programming problem,
\begin{align} \label{eq: primal-ODM$^L$}
\begin{split}
\min_{\wvec, \xivec} & \ \ \frac{1}{2} \wvec^\top \wvec + \wvec^\top \Xmat \frac{\lambda_1(m \Imat - \yvec \yvec^\top)}{m^2} \Xmat^\top \wvec - \frac{\lambda_2}{m} (\Xmat \yvec)^\top \wvec + \frac{C}{m} \evec^\top \xivec \\
\mbox{s.t.} & \ \ \Ymat \Xmat^\top \wvec \geq \evec - \xivec, \\ & \ \ \xivec \geq \zerovec.
\end{split}
\end{align}
where $\evec$ stands for the all-one vector. Introduce the lagrange multipliers $\alphavec \geq \zerovec$ and $\betavec \geq \zerovec$ for the first and the second constraints respectively, the Lagrangian of (\ref{eq: primal-ODM$^L$}) leads to
\begin{align}
L(\wvec, \xivec, \alphavec, \betavec) & = \frac{1}{2} \wvec^\top \wvec + \wvec^\top \Xmat \frac{\lambda_1(m \Imat - \yvec \yvec^\top)}{m^2} \Xmat^\top \wvec - \frac{\lambda_2}{m} (\Xmat \yvec)^\top \wvec \notag \\
& \ \ \ + \frac{C}{m} \evec^\top \xivec - \alphavec^\top (\Ymat \Xmat^\top \wvec - \evec + \xivec) - \betavec^\top \xivec \notag \\
\label{eq: Lagrange ODM$^L$}
\begin{split}
& = \frac{1}{2} \wvec^\top \Qmat \wvec - \wvec^\top \Xmat \Ymat \left( \frac{\lambda_2}{m} \evec + \alphavec \right) + \alphavec^\top \evec \\
& \ \ \ + \xivec^\top \left( \frac{C}{m} \evec - \alphavec - \betavec \right)
\end{split}
\end{align}
where $\Qmat = \Imat + \Xmat \frac{2\lambda_1(m \Imat - \yvec \yvec^\top)}{m^2} \Xmat^\top$. By setting the partial derivations of $\{\wvec, \xivec\}$ to zero, we have
\begin{align}
\label{eq: KKT1 ODM$^L$}
\frac{\partial L}{\partial \wvec} & = \Qmat \wvec - \Xmat \Ymat \left( \frac{\lambda_2}{m} \evec + \alphavec \right) \Longrightarrow \wvec = \Qmat^{-1} \Xmat \Ymat \left( \frac{\lambda_2}{m} \evec + \alphavec \right), \\
\label{eq: KKT2 ODM$^L$}
\frac{\partial L}{\partial \xivec} & = \frac{C}{m} \evec - \alphavec - \betavec \Longrightarrow \zerovec \leq \alphavec \leq \frac{C}{m} \evec.
\end{align}
By substituting (\ref{eq: KKT1 ODM$^L$}) and (\ref{eq: KKT2 ODM$^L$}) into (\ref{eq: Lagrange ODM$^L$}), the dual of (\ref{eq: primal-ODM$^L$}) can be cast as:
\begin{align} \label{eq: dual ODM$^L$ temp}
\begin{split}
\min_{\alphavec} & \ \ f(\alphavec) = \frac{1}{2} \left( \frac{\lambda_2}{m} \evec + \alphavec \right)^\top \Ymat \Xmat^\top \Qmat^{-1} \Xmat \Ymat \left( \frac{\lambda_2}{m} \evec + \alphavec \right) - \evec^\top \alphavec \\
\mbox{s.t.} & \ \ \zerovec \leq \alphavec \leq \frac{C}{m} \evec.
\end{split}
\end{align}
Note that the dimension of $\Qmat$ depends on the feature mapping $\phi(\xvec)$ and may not be calculated if $\phi(\xvec)$ maps an instance into an infinite dimension space. Fortunately, $\Xmat^\top \Qmat^{-1} \Xmat$ is a $m \times m$ square matrix, next we show how to calculate this matrix.

\begin{lemma}
For any matrix $\Xmat$ and $\Amat$, it holds that $(\Imat + \Xmat \Amat \Xmat^\top)^{-1} = \Imat - \Xmat (\Amat^{-1} + \Xmat^\top \Xmat)^{-1} \Xmat^\top$.
\end{lemma}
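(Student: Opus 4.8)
The plan is to verify the identity directly by showing that the product of $(\Imat + \Xmat \Amat \Xmat^\top)$ with the claimed inverse equals the identity. This is the standard route to the Sherman--Morrison--Woodbury formula, and here the specialization $\Imat + \Xmat \Amat \Xmat^\top$ (rather than $\Bmat + \Xmat \Amat \Xmat^\top$) keeps the bookkeeping light. Concretely, I would multiply
$(\Imat + \Xmat \Amat \Xmat^\top)\bigl(\Imat - \Xmat (\Amat^{-1} + \Xmat^\top \Xmat)^{-1} \Xmat^\top\bigr)$
and expand into four terms: $\Imat$, $-\Xmat (\Amat^{-1}+\Xmat^\top\Xmat)^{-1}\Xmat^\top$, $\Xmat\Amat\Xmat^\top$, and $-\Xmat\Amat\Xmat^\top\Xmat(\Amat^{-1}+\Xmat^\top\Xmat)^{-1}\Xmat^\top$.

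The key algebraic move is to factor $\Xmat$ out on the left and $\Xmat^\top$ on the right of all terms except the leading $\Imat$, reducing the problem to showing that the inner $m\times m$ (or appropriately sized) expression
$-(\Amat^{-1}+\Xmat^\top\Xmat)^{-1} + \Amat - \Amat\Xmat^\top\Xmat(\Amat^{-1}+\Xmat^\top\Xmat)^{-1}$
vanishes. Writing $\Amat = \Amat(\Amat^{-1}+\Xmat^\top\Xmat)(\Amat^{-1}+\Xmat^\top\Xmat)^{-1}$ and noting $\Amat(\Amat^{-1}+\Xmat^\top\Xmat) = \Imat + \Amat\Xmat^\top\Xmat$, the last two terms combine to $(\Imat + \Amat\Xmat^\top\Xmat)(\Amat^{-1}+\Xmat^\top\Xmat)^{-1} - \Amat\Xmat^\top\Xmat(\Amat^{-1}+\Xmat^\top\Xmat)^{-1} = (\Amat^{-1}+\Xmat^\top\Xmat)^{-1}$, which cancels the first term. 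Hence the inner expression is zero and the product collapses to $\Imat$; one then remarks that the analogous computation with the factors in the opposite order gives the same cancellation, so the claimed matrix is a genuine two-sided inverse.

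The main obstacle is not conceptual but notational: one must be careful that $\Amat^{-1}$ and $(\Amat^{-1}+\Xmat^\top\Xmat)^{-1}$ actually exist (the statement implicitly assumes $\Amat$ is invertible and the sizes are compatible, with $\Xmat^\top\Xmat$ square of the same order as $\Amat$), and one must resist the temptation to commute matrices that do not commute. The trick that makes the cancellation transparent is the rewriting $\Amat\Xmat^\top\Xmat = \Amat(\Amat^{-1}+\Xmat^\top\Xmat) - \Imat$, inserted at exactly the right place; everything else is routine expansion. This lemma is then exactly what is needed to turn $\Xmat^\top\Qmat^{-1}\Xmat$ in (\ref{eq: dual ODM$^L$ temp}) into a finite $m\times m$ object expressible purely through the kernel matrix $\Xmat^\top\Xmat$.
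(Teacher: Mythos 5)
Your proposal is correct and follows essentially the same route as the paper: both verify the identity by multiplying the claimed inverse against $\Imat + \Xmat\Amat\Xmat^\top$, expanding into four terms, and cancelling via the rewriting $\Xmat^\top\Xmat = (\Amat^{-1}+\Xmat^\top\Xmat) - \Amat^{-1}$ (your $\Amat\Xmat^\top\Xmat = \Amat(\Amat^{-1}+\Xmat^\top\Xmat) - \Imat$ is the same trick); the only differences are the order of the two factors and that you factor $\Xmat(\cdot)\Xmat^\top$ out before cancelling, which are immaterial. Your added remarks on invertibility of $\Amat$ and on the two-sided nature of the inverse are reasonable touches the paper omits.
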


\begin{proof}
By multiplying the right side with $\Imat + \Xmat \Amat \Xmat^\top$, we have
\begin{align*}
& \ \ \ \ (\Imat - \Xmat (\Amat^{-1} + \Xmat^\top \Xmat)^{-1} \Xmat^\top)(\Imat + \Xmat \Amat \Xmat^\top) \\
& = \Imat - \Xmat (\Amat^{-1} + \Xmat^\top \Xmat)^{-1} \Xmat^\top + \Xmat \Amat \Xmat^\top - \Xmat (\Amat^{-1} + \Xmat^\top \Xmat)^{-1} \Xmat^\top \Xmat \Amat \Xmat^\top \\
& = \Imat - \Xmat (\Amat^{-1} + \Xmat^\top \Xmat)^{-1} \Xmat^\top + \Xmat \Amat \Xmat^\top \\
& \ \ \ - \Xmat (\Amat^{-1} + \Xmat^\top \Xmat)^{-1} (\Xmat^\top \Xmat + \Amat^{-1} - \Amat^{-1}) \Amat \Xmat^\top \\
& = \Imat - \Xmat (\Amat^{-1} + \Xmat^\top \Xmat)^{-1} \Xmat^\top + \Xmat \Amat \Xmat^\top - \Xmat \Amat \Xmat^\top \\
& \ \ \ + \Xmat (\Amat^{-1} + \Xmat^\top \Xmat)^{-1} \Xmat^\top \\
& = \Imat
\end{align*}
It is shown that
\begin{align*}
(\Imat + \Xmat \Amat \Xmat^\top)^{-1} = \Imat - \Xmat (\Amat^{-1} + \Xmat^\top \Xmat)^{-1} \Xmat^\top.
\end{align*}
\end{proof}

According to Lemma 1, we have
\begin{align} \label{eq: invQ x}
\begin{split}
\Qmat^{-1} \Xmat & = (\Imat - \Xmat (\Amat^{-1} + \Xmat^\top \Xmat)^{-1} \Xmat^\top) \Xmat \\
& = \Xmat (\Imat - (\Amat^{-1} + \Gmat)^{-1} \Gmat) \\
& = \Xmat (\Imat - (\Amat^{-1} + \Gmat)^{-1} (\Amat^{-1} + \Gmat - \Amat^{-1})) \\
& = \Xmat (\Imat - \Imat + (\Amat^{-1} + \Gmat)^{-1} \Amat^{-1}) \\
& = \Xmat (\Imat + \Amat \Gmat)^{-1}
\end{split}
\end{align}
where $\Amat = \frac{2\lambda_1(m \Imat - \yvec \yvec^\top)}{m^2}$ and $\Gmat = \Xmat^\top \Xmat$ is the kernel matrix. Then
\begin{align*}
\Xmat^\top \Qmat^{-1} \Xmat = \Gmat (\Imat + \Amat \Gmat)^{-1}.
\end{align*}
By denoting $\Hmat = \Ymat \Gmat (\Imat + \Amat \Gmat)^{-1} \Ymat$, the objective function of (\ref{eq: dual ODM$^L$ temp}) can be written as
\begin{align*}
f(\alphavec) & = \frac{1}{2} \left( \frac{\lambda_2}{m} \evec + \alphavec \right)^\top \Hmat \left( \frac{\lambda_2}{m} \evec + \alphavec \right) - \evec^\top \alphavec \\
& = \frac{1}{2} \alphavec^\top \Hmat \alphavec + \frac{\lambda_2}{m} \evec^\top \Hmat \alphavec - \evec^\top \alphavec + const \\
& = \frac{1}{2} \alphavec^\top \Hmat \alphavec + \left( \frac{\lambda_2}{m} \Hmat \evec - \evec \right)^\top \alphavec + const
\end{align*}
Negelect the const term which doesn't have influence on the optimization, we arrive at the final formulation of the dual kernel ODM$^L$,
\begin{align} \label{eq: dual ODM$^L$}
\begin{split}
\min_{\alphavec} & \ \ \frac{1}{2} \alphavec^\top \Hmat \alphavec + \left( \frac{\lambda_2}{m} \Hmat \evec - \evec \right)^\top \alphavec \\
\mbox{s.t.} & \ \ \zerovec \leq \alphavec \leq \frac{C}{m} \evec.
\end{split}
\end{align}

For prediction, according to (\ref{eq: KKT1 ODM$^L$}) and (\ref{eq: invQ x}), one can obtain the coefficients $\wvec$ from the optimal $\alphavec^*$ as
\begin{align*}
\wvec = \Xmat (\Imat + \Amat \Gmat)^{-1} \Ymat \left( \frac{\lambda_2}{m} \evec + \alphavec^* \right) = \Xmat \thetavec,
\end{align*}
where $\thetavec = (\Imat + \Amat \Gmat)^{-1} \Ymat (\frac{\lambda_2}{m} \evec + \alphavec^*)$. Hence for testing instance $\zvec$, its label can be obtained by
\begin{align*}
sgn \left( \wvec^\top \phi(\zvec) \right) = sgn \left( \sum_{i=1}^{m} \theta_i k(\xvec_i, \zvec) \right).
\end{align*}

\subsubsection{Kernel ODM} \label{sec: kernelODM}

Introduce the lagrange multipliers $\zetavec \geq \zerovec$ and $\betavec \geq \zerovec$ for the two constraints respectively, the Lagrangian of (\ref{eq: ODM}) leads to
\begin{align} \label{eq: Lagrange ODM}
\begin{split}
L(\wvec, \xivec, \epsilonvec, \zetavec, \betavec) & = \frac{1}{2} \wvec^\top \wvec + \frac{C_1}{m} \xivec^\top \xivec + \frac{C_2}{m} \epsilonvec^\top \epsilonvec - \zetavec^\top (\Ymat \Xmat^\top \wvec - (1 - D) \evec + \xivec) \\
& \ \ \ + \betavec^\top (\Ymat \Xmat^\top \wvec - (1 + D) \evec - \xivec).
\end{split}
\end{align}
By setting the partial derivations of $\wvec, \xivec, \epsilonvec$ to zero, we have
\begin{align}
\label{eq: KKT1 ODM}
\frac{\partial L}{\partial \wvec} & = \wvec - \Xmat \Ymat \zetavec + \Xmat \Ymat \betavec \Longrightarrow \wvec = \Xmat \Ymat (\zetavec - \betavec) \\
\label{eq: KKT2 ODM}
\frac{\partial L}{\partial \xivec} & = \frac{2C_1}{m} \xivec - \zetavec \Longrightarrow \xivec = \frac{m}{2C_1} \zetavec \\
\label{eq: KKT3 ODM}
\frac{\partial L}{\partial \epsilonvec} & = \frac{2C_2}{m} \epsilonvec - \betavec \Longrightarrow \epsilonvec = \frac{m}{2C_2} \betavec
\end{align}
By substituting (\ref{eq: KKT1 ODM}), (\ref{eq: KKT2 ODM}) and (\ref{eq: KKT3 ODM}) into (\ref{eq: Lagrange ODM}), we have
\begin{align*}
L(\zetavec, \betavec) & = - \frac{1}{2} (\zetavec - \betavec)^\top \Ymat \Xmat^\top \Xmat \Ymat (\zetavec - \betavec) + \frac{C_1}{m} \frac{m^2}{4 C_1^2} \zetavec^\top \zetavec + \frac{C_2}{m} \frac{m^2}{4 C_2^2} \betavec^\top \betavec \\
& \ \ \ - \frac{m}{2C_1} \zetavec^\top \zetavec - \frac{m}{2C_2} \betavec^\top \betavec + (1 - D) \zetavec^\top \evec - (1 + D) \betavec^\top \evec \\
& = - \frac{1}{2} (\zetavec - \betavec)^\top \Qmat (\zetavec - \betavec) - \frac{m}{4C_1} \zetavec^\top \zetavec - \frac{m}{4C_2} \betavec^\top \betavec \\
& \ \ \ + (1 - D) \zetavec^\top \evec - (1 + D) \betavec^\top \evec
\end{align*}
where $\Qmat = \Ymat \Xmat^\top \Xmat \Ymat$. Denote $\alphavec^\top = [\zetavec^\top, \betavec^\top]$, then $\zetavec = [\Imat, \zerovec] \alphavec$, $\betavec = [\zerovec, \Imat] \alphavec$ and $\zetavec - \betavec = [\Imat, -\Imat] \alphavec$. The Lagrangian can be rewritten as
\begin{align*}
L(\zetavec, \betavec) & = - \frac{1}{2} \alphavec^\top [\Imat, -\Imat]^\top \Qmat [\Imat, -\Imat] \alphavec - \frac{m}{4C_1} \alphavec^\top [\Imat, \zerovec]^\top [\Imat, \zerovec] \alphavec \\
& \ \ \ - \frac{m}{4C_2} \alphavec^\top [\zerovec, \Imat]^\top [\zerovec, \Imat] \alphavec + (1 - D) \evec^\top [\Imat, \zerovec] \alphavec - (1 + D) \evec^\top [\zerovec, \Imat] \alphavec \\
& = - \frac{1}{2} \alphavec^\top
\begin{bmatrix}
\Qmat & -\Qmat \\
- \Qmat & \Qmat
\end{bmatrix}
 \alphavec - \frac{1}{2} \alphavec^\top
\begin{bmatrix}
\frac{m}{2C_1} \Imat & \zerovec \\
\zerovec & \frac{m}{2C_2} \Imat
\end{bmatrix}
\alphavec + \begin{bmatrix}
(1 - D) \evec \\
- (1 + D) \evec
\end{bmatrix}^\top
\alphavec
\end{align*}
where $\evec$ stands for the all-one vector. Thus the dual of (\ref{eq: ODM}) can be cast as:
\begin{align} \label{eq: final ODM}
\begin{split}
\min_{\alphavec} & \ \ \ \frac{1}{2} \alphavec^\top \begin{bmatrix}
\Qmat + \frac{m}{2C_1} \Imat & -\Qmat \\
- \Qmat & \Qmat + \frac{m}{2C_2} \Imat
\end{bmatrix} \alphavec + \begin{bmatrix}
(D - 1) \evec \\
(D + 1) \evec
\end{bmatrix}^\top \alphavec \\
\st & \ \ \ \alphavec \geq \zerovec.
\end{split}
\end{align}

For prediction, according to (\ref{eq: KKT1 ODM}), one can obtain the coefficients $\wvec$ from the optimal $\alphavec^*$ as
\begin{align*}
\wvec = \Xmat \Ymat (\zetavec - \betavec) = \Xmat \Ymat [\Imat, -\Imat] \alphavec^* = \Xmat \thetavec,
\end{align*}
where $\thetavec = \Ymat [\Imat, -\Imat] \alphavec^*$. Hence for testing instance $\zvec$, its label can be obtained by
\begin{align*}
sgn \left( \wvec^\top \phi(\zvec) \right) = sgn \left( \sum_{i=1}^{m} \theta_i k(\xvec_i, \zvec) \right).
\end{align*}

\subsubsection{Dual Coordinate Descent} \label{sec: DCD}

Note that (\ref{eq: dual ODM$^L$}) and (\ref{eq: final ODM}) are both the special cases of the following form, which has convex quadratic objective function and simple decoupled box constraints,
\begin{align*}
\min_{\alphavec} & \ \ \ f(\alphavec) = \frac{1}{2} \alphavec^\top \Hmat \alphavec + \qvec^\top \alphavec \\
\st & \ \ \ \zerovec \leq \alphavec \leq \uvec.
\end{align*}
where $\uvec = \boldsymbol{\infty}$ for ODM. As suggested by~\citep{Yuan2012Recent}, it can be efficiently solved by the dual coordinate descent method. In dual coordinate descent method~\citep{Hsieh2008A}, one of the variables is selected to minimize while the other variables are kept as constants at each iteration, and a closed-form solution can be achieved at each iteration. Specifically, to minimize $\alpha_i$ by keeping the other $\alpha_{j\neq i}$'s as constants, one needs to solve the following subproblem,
\begin{align} \label{eq: simpleQP}
\begin{split}
\min_{t} & \ \ f(\alphavec + t \evec_i) \\
\mbox{s.t.} & \ \ 0 \leq \alpha_i + t \leq u_i,
\end{split}
\end{align}
where $\evec_i$ denotes the vector with $1$ in the $i$-th coordinate and $0$'s elsewhere. Let $\Hmat = [h_{ij}]_{i,j=1,\dots,m}$, we have
\begin{align*}
f(\alphavec + t \evec_i) = \frac{1}{2} h_{ii} t^2 + [\nabla f(\alphavec)]_i t + f(\alphavec),
\end{align*}
where $[\nabla f(\alphavec)]_i$ is the $i$-th component of the gradient $\nabla f(\alphavec)$. Note that $f(\alphavec)$ is independent of $t$ and thus can be dropped. Considering that $f(\alphavec + t \evec_i)$ is a simple quadratic function of $t$, and further note the box constraint $0 \leq \alpha_i \leq u_i$, the minimizer of (\ref{eq: simpleQP}) leads to a closed-form solution,
\begin{align*}
\alpha_i^{new} = \min\left( \max \left( \alpha_i - \frac{[\nabla f(\alphavec)]_i}{h_{ii}}, 0 \right), u_i \right).
\end{align*}
Algorithm~\ref{alg: ODM_dcd} summarizes the pseudo-code of $ODM_{dcd}$ for kernel ODM$^L$ and ODM.

\begin{algorithm}[h]
\caption{$ODM_{dcd}$}
\begin{algorithmic}
\STATE {\bfseries Input:} Data set $\Xmat$.
\STATE {\bfseries Output:} $\alphavec$.
\STATE  Initialize $\alphavec = \zerovec$, calculate $\Hmat$ and $\qvec$.
\WHILE{$\alphavec$ not converge}
\FOR{$i = 1, \dots m$}
\STATE  $[\nabla f(\alphavec)]_i \leftarrow [\Hmat \alphavec + \qvec]_i$.
\STATE  $\alpha_i \leftarrow \min\left( \max \left( \alpha_i -\frac{[\nabla f(\alphavec)]_i}{h_{ii}}, 0 \right), u_i \right)$.
\ENDFOR
\ENDWHILE
\end{algorithmic}
\label{alg: ODM_dcd}
\end{algorithm}

\subsection{$ODM_{svrg}$} \label{sec: svrg}

In section~\ref{sec: DCD}, the proposed method can efficiently deal with kernel ODM$^L$ and ODM. However, the inherent computational cost for the kernel matrix takes $O(m^2)$ time, which might be computational prohibitive for large scale problems. To make them more useful, in the following, we present a fast linear kernel ODM$^L$ and ODM for large scale problems by adopting the stochastic gradient descent with variance reduction (SVRG)~\citep{Polyak1992Acceleration, Accelerating2013Johnson}.

For linear kernel ODM$^L$, (\ref{eq: ODM$^L$}) can be reformulated as the following form,
\begin{align} \label{eq: linear ODM$^L$}
\begin{split}
\min_{\wvec} \ \ f_L(\wvec) & = \frac{1}{2} \wvec^\top \wvec + \frac{\lambda_1}{m} \wvec^\top \Xmat \Xmat^\top \wvec - \frac{\lambda_1}{m^2} \wvec^\top \Xmat \yvec \yvec^\top \Xmat^\top \wvec \\
& \ \ \ - \frac{\lambda_2}{m} (\Xmat \yvec)^\top \wvec + \frac{C}{m} \sum_{i=1}^m \max\{0, 1 - y_i \wvec^\top \xvec_i\},
\end{split}
\end{align}
where $\Xmat = [\xvec_1, \ldots, \xvec_m]$, $\yvec = [y_1, \ldots, y_m]^\top$ is a column vector. For linear kernel ODM, (\ref{eq: ODM}) can be reformulated as the following form,
\begin{align} \label{eq: linear ODM}
\begin{split}
\min_{\wvec} \ \ f_O(\wvec) & = \frac{1}{2} \wvec^\top \wvec + \frac{C_1}{m} \sum_{i=1}^m \max \{0, 1 - D - y_i \wvec^\top \xvec_i\}^2 \\
& \ \ \ + \frac{C_2}{m} \sum_{i=1}^m \max \{0, y_i \wvec^\top \xvec_i - 1 - D\}^2.
\end{split}
\end{align}

For large scale problems, computing the gradient of (\ref{eq: linear ODM$^L$}) and (\ref{eq: linear ODM}) is expensive because its computation involves all the training examples. Stochastic gradient descent (SGD) works by computing a noisy unbiased estimation of the gradient via sampling a subset of the training examples. Theoretically, when the objective is convex, it can be shown that in expectation, SGD converges to the global optimal solution~\citep{Kushner2003Stochastic,Bottou2010Large-Scale}. During the past decade, SGD has been applied to various machine learning problems and achieved promising performances~\citep{Zhang2004Solving,Shalev-Shwartz2007Pegasos,Bordes2009SGD-QN,Shamir2013Stochastic,Accelerating2013Johnson,Reddi2015On,Zhao2015Stochastic}.

The following theorem presents an approach to obtain an unbiased estimation of the gradient $\nabla f_L(\wvec)$ and $\nabla f_O(\wvec)$.

\begin{theorem} \label{thm: SGD for ODM$^L$ODM}
If two examples $(\xvec_i, y_i)$ and $(\xvec_j, y_j)$ are randomly sampled from the training set independently, then
\begin{align} \label{eq: SGD update for ODM$^L$}
\begin{split}
\nabla f_L(\wvec, \xvec_i, \xvec_j) & = \wvec + 2 \lambda_1 \xvec_i \xvec_i^\top \wvec - 2 \lambda_1 y_i y_j \xvec_i \xvec_j ^\top \wvec - \lambda_2 y_i \xvec_i \\
& \ \ \ - C y_i \xvec_i \mathbb{I}(i \in I_1)
\end{split}
\end{align}
and
\begin{align} \label{eq: SGD update for ODM}
\begin{split}
\nabla f_O(\wvec, \xvec_i) & = \wvec + 2 C_1 (y_i \wvec^\top \xvec_i + D - 1) y_i \xvec_i \mathbb{I}(i \in I_2) \\
& \ \ \ + 2 C_2 (y_i \wvec^\top \xvec_i - D - 1) y_i \xvec_i \mathbb{I}(i \in I_3)
\end{split}
\end{align}
are the unbiased estimation of $\nabla f_L(\wvec)$ and $\nabla f_O(\wvec)$ respectively, where $\mathbb{I}(\cdot)$ is the indicator function that returns 1 when the argument holds, and 0 otherwise. $I_1$, $I_2$, $I_3$ are the index sets defined as
\begin{align*}
I_1 & \equiv \{ i \ | \ y_i \wvec^\top \xvec_i < 1 \}, \\
I_2 & \equiv \{ i \ | \ y_i \wvec^\top \xvec_i < 1 - D \}, \\
I_3 & \equiv \{ i \ | \ y_i \wvec^\top \xvec_i > 1 + D \}.
\end{align*}
\end{theorem}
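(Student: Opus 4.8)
The plan is to compute the exact gradients $\nabla f_L(\wvec)$ and $\nabla f_O(\wvec)$ from (\ref{eq: linear ODM$^L$}) and (\ref{eq: linear ODM}) by term-by-term differentiation, and then to verify that the expectation of each stated random vector, taken over the uniform (and, for ODM$^L$, independent) sampling of the indices, reproduces the exact gradient. Since expectation is linear, it suffices to match the two sides summand by summand.

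For $f_L$, I would first expand $\Xmat\Xmat^\top=\sum_{i=1}^m\xvec_i\xvec_i^\top$ and $\Xmat\yvec=\sum_{i=1}^m y_i\xvec_i$ so that every term becomes an explicit sum over the training set. Differentiating then gives $\wvec$ from the regularizer, $\frac{2\lambda_1}{m}\sum_i\xvec_i\xvec_i^\top\wvec$ from $\frac{\lambda_1}{m}\wvec^\top\Xmat\Xmat^\top\wvec$, $-\frac{2\lambda_1}{m^2}(\sum_i y_i\xvec_i)(\sum_j y_j\xvec_j^\top)\wvec$ from $-\frac{\lambda_1}{m^2}\wvec^\top\Xmat\yvec\yvec^\top\Xmat^\top\wvec$, $-\frac{\lambda_2}{m}\sum_i y_i\xvec_i$ from the margin-mean term, and, using that a subgradient of $\max\{0,1-y_i\wvec^\top\xvec_i\}$ is $-y_i\xvec_i\mathbb{I}(y_i\wvec^\top\xvec_i<1)$, the term $-\frac{C}{m}\sum_{i\in I_1}y_i\xvec_i$. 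Now if $i$ is drawn uniformly from $\{1,\dots,m\}$ then $\Ebb[2\lambda_1\xvec_i\xvec_i^\top\wvec]=\frac{2\lambda_1}{m}\sum_i\xvec_i\xvec_i^\top\wvec$, $\Ebb[\lambda_2 y_i\xvec_i]=\frac{\lambda_2}{m}\sum_i y_i\xvec_i$, and $\Ebb[C y_i\xvec_i\mathbb{I}(i\in I_1)]=\frac{C}{m}\sum_{i\in I_1}y_i\xvec_i$, which match three of the five terms. The remaining rank-one term $\Xmat\yvec\yvec^\top\Xmat^\top$ is a product of two independent sums over $m$ indices, so I would sample a \emph{second} index $j$ \emph{independently} of $i$; then $\Ebb[2\lambda_1 y_i y_j\xvec_i\xvec_j^\top\wvec]=\frac{2\lambda_1}{m^2}(\sum_i y_i\xvec_i)(\sum_j y_j\xvec_j^\top)\wvec$ by independence. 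Adding $\Ebb[\wvec]=\wvec$ and summing yields $\Ebb[\nabla f_L(\wvec,\xvec_i,\xvec_j)]=\nabla f_L(\wvec)$, which is exactly (\ref{eq: SGD update for ODM$^L$}).

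For $f_O$, the only nonroutine step is differentiating the two squared-hinge terms. With $u=y_i\wvec^\top\xvec_i$, the map $u\mapsto\max\{0,1-D-u\}^2$ has derivative $-2\max\{0,1-D-u\}=2(u+D-1)\mathbb{I}(u<1-D)$, and $u\mapsto\max\{0,u-1-D\}^2$ has derivative $2\max\{0,u-1-D\}=2(u-D-1)\mathbb{I}(u>1+D)$; applying the chain rule ($\nabla_\wvec u=y_i\xvec_i$) and recalling the definitions of $I_2$ and $I_3$ gives $\nabla f_O(\wvec)=\wvec+\frac{2C_1}{m}\sum_i(y_i\wvec^\top\xvec_i+D-1)y_i\xvec_i\mathbb{I}(i\in I_2)+\frac{2C_2}{m}\sum_i(y_i\wvec^\top\xvec_i-D-1)y_i\xvec_i\mathbb{I}(i\in I_3)$. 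Taking $\Ebb$ over a uniformly random $i$ of the vector in (\ref{eq: SGD update for ODM}) then recovers this expression term by term.

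I expect the differentiation itself to be entirely routine; the one point that genuinely needs attention — and the reason the ODM$^L$ estimator uses \emph{two} sampled examples rather than one — is the unbiased estimation of the rank-one cross term coming from the squared margin mean inside the variance: it must be written as a double sum and estimated with two independent draws, since a single draw $i$ would instead produce $\frac{2\lambda_1}{m}\sum_i\xvec_i\xvec_i^\top\wvec$ (the diagonal) rather than the full double sum. A secondary care point is that the hinge loss in $f_L$ is only subdifferentiable at $y_i\wvec^\top\xvec_i=1$, so I would phrase the corresponding step as the choice of a particular subgradient — the one vanishing when $y_i\wvec^\top\xvec_i>1$ — which is precisely what $\mathbb{I}(i\in I_1)$ encodes.
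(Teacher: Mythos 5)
Your proposal is correct and follows essentially the same route as the paper: compute the exact gradients of $f_L$ and $f_O$ term by term, then verify unbiasedness by matching expectations summand by summand, using the independence of the two sampled indices to handle the rank-one cross term $\Xmat\yvec\yvec^\top\Xmat^\top\wvec$. Your additional remarks on the subgradient choice at $y_i\wvec^\top\xvec_i=1$ and on why a single draw would only recover the diagonal of the double sum are sound refinements of the same argument, not a different one.
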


\begin{proof}
Note that the gradient of $f_L(\wvec)$ is
\begin{align*}
\nabla f_L(\wvec) & = \wvec + \frac{2 \lambda_1}{m} \Xmat \Xmat^\top \wvec - \frac{2 \lambda_1}{m^2} \Xmat \yvec \yvec^\top \Xmat^\top \wvec - \frac{\lambda_2}{m} \Xmat \yvec \\
 & \ \ \ - \frac{C}{m} \sum_{i=1}^m y_i \xvec_i \mathbb{I}(i \in I_1).
\end{align*}
Further note that
\begin{align} \label{eq: expectation}
E [\xvec_i \xvec_i^\top] = \frac{1}{m} \sum_{i=1}^m \xvec_i \xvec_i^\top = \frac{1}{m} \Xmat \Xmat^\top, \ \ E [y_i \xvec_i] = \frac{1}{m} \sum_{i=1}^m y_i \xvec_i = \frac{1}{m} \Xmat \yvec.
\end{align}
According to the linearity of expectation, the independence between $\xvec_i$ and $\xvec_j$, and with (\ref{eq: expectation}), we have
\begin{align*}
E [\nabla f_L(\wvec, \xvec_i, \xvec_j)] & = \wvec + 2 \lambda_1 E [\xvec_i \xvec_i^\top] \wvec - 2 \lambda_1 E [y_i \xvec_i] E [y_j \xvec_j]^\top \wvec \\
& \ \ \ - \lambda_2 E [y_i \xvec_i] - C E [y_i \xvec_i \mathbb{I}(i \in I_1)] \\
& = \wvec + \frac{2 \lambda_1}{m} \Xmat \Xmat^\top \wvec - \frac{2 \lambda_1}{m^2} \Xmat \yvec \yvec^\top \Xmat^\top \wvec \\
& \ \ \ - \frac{\lambda_2}{m} \Xmat \yvec - \frac{C}{m} \sum_{i=1}^m y_i \xvec_i \mathbb{I}(i \in I_1) \\
& = \nabla f_L(\wvec)
\end{align*}
It is shown that $\nabla f_L(\wvec, \xvec_i, \xvec_j)$ is a noisy unbiased gradient of $f_L(\wvec)$.

Again the gradient of $f_O(\wvec)$ is
\begin{align*}
\nabla f_O(\wvec) & = \wvec + \frac{2 C_1}{m} \sum_{i=1}^m (y_i \wvec^\top \xvec_i + D - 1) y_i \xvec_i \mathbb{I}(i \in I_2) \\
& \ \ \ + \frac{2 C_2}{m} \sum_{i=1}^m (y_i \wvec^\top \xvec_i - D - 1) y_i \xvec_i \mathbb{I}(i \in I_3).
\end{align*}
According to the linearity of expectation, and with (\ref{eq: expectation}), we have
\begin{align*}
E [\nabla f_O(\wvec, \xvec_i)] & = \wvec + 2 C_1 \Ebb[(y_i \wvec^\top \xvec_i + D - 1) y_i \xvec_i \mathbb{I}(i \in I_2)] \\
& \ \ \ + 2 C_2 \Ebb[(y_i \wvec^\top \xvec_i - D - 1) y_i \xvec_i \mathbb{I}(i \in I_3)] \\
& = \wvec + \frac{2 C_1}{m} \sum_{i=1}^m (y_i \wvec^\top \xvec_i + D - 1) y_i \xvec_i \mathbb{I}(i \in I_2) \\
& \ \ \ + \frac{2 C_2}{m} \sum_{i=1}^m (y_i \wvec^\top \xvec_i - D - 1) y_i \xvec_i \mathbb{I}(i \in I_3) \\
& = \nabla f_O(\wvec)
\end{align*}
It is shown that $\nabla f_O(\wvec, \xvec_i)$ is a noisy unbiased gradient of $f_O(\wvec)$.
\end{proof}

With Theorem~\ref{thm: SGD for ODM$^L$ODM}, the stochastic gradient update can be formed as
\begin{align} \label{eq:SGD update}
\wvec_{t+1} = \wvec_t - \eta_t \gvec_t,
\end{align}
where $\gvec_t = \nabla f_L(\wvec_t, \xvec_i, \xvec_j)$ for ODM$^L$ and $g_t = \nabla f_O(\wvec_t, \xvec_i)$ for ODM, $\eta_t$ is a suitably chosen step-size parameter in the $t$-th iteration.

Since the objective function of ODM is differentiable, in practice we use the stochastic gradient descent with variance reduction (SVRG) which is more robust than SGD~\citep{Accelerating2013Johnson}. Besides performing the normal stochastic gradient update (\ref{eq:SGD update}) at each iteration, it also occasionally compute full gradient, which can be used to reduce the variance of the stochastic gradient estimation.

Algorithm~\ref{alg: ODM_{svrg}} summarizes the pseudo-code of $ODM_{svrg}$.

\begin{algorithm}[h]
\caption{$ODM_{svrg}$}
\begin{algorithmic}
\STATE {\bfseries Input:} Data set $\Xmat$.
\STATE {\bfseries Output:} $\wvecbar$
\STATE  Initialize $\wvecbar_0 = \zerovec$.
\FOR{$s = 1, 2, \ldots$}
\STATE  $\wvecbar = \wvecbar_{s-1}$.
\STATE  Compute full gradient $\muvecbar$
\STATE  $\wvec_0 = \wvecbar$
\FOR{$t = 1, 2, \ldots, m$}
\STATE  Randomly sample training example $(\xvec_i, y_i)$.
\STATE  Compute $\gvec_t$ as in (\ref{eq: SGD update for ODM}).
\STATE  $\wvec_t = \wvec_{t-1} - \eta (\nabla f_O(\wvec_{t-1}, \xvec_i) - \nabla f_O(\wvecbar, \xvec_i) + \muvecbar)$.
\ENDFOR
\STATE  Set $\wvecbar_s = \wvec_t$ for randomly chosen $t \in \{1, 2, \ldots, m\}$.
\ENDFOR
\end{algorithmic}
\label{alg: ODM_{svrg}}
\end{algorithm}

\section{Analysis} \label{sec: analysis}

In this section, we study the statistical property of ODM$^L$ and ODM. Here we only consider the linear case for simplicity, however, the results are also applicable to any other feature mapping $\phi$. As indicated in Section~\ref{sec: ODM_dcd}, the dual problem both take the following form,
\begin{align} \label{eq: dual linear problem}
\begin{split}
\min_{\alphavec} & \ \ f(\alphavec) = \frac{1}{2} \alphavec^\top \Hmat \alphavec + \qvec^\top \alphavec, \\
\mbox{s.t.} & \ \ 0 \leq \alpha_i \leq u_i, \ i = 1, \ldots, m.
\end{split}
\end{align}

\begin{lemma} \label{lem: bound for qp}
Let $\alphavec$ denote the optimal solution of (\ref{eq: dual linear problem}), and suppose
\begin{align} \label{eq: solution}
\begin{split}
\alphavec^* & = \argmin_{\zerovec \leq \alphavec \leq \uvec} f(\alphavec), \\
\alphavec^i & = \argmin_{\zerovec \leq \alphavec \leq \uvec, \alpha_i = 0} f(\alphavec), \ \ i = 1, \ldots, m,
\end{split}
\end{align}
then we have
\begin{align*}
\frac{[\Hmat \alphavec^i + \qvec]_i^2}{2 h_{ii}} \leq \frac{{\alpha_i^*}^2}{2} h_{ii} - \alpha_i^* [\Hmat \alphavec^* + \qvec]_i,
\end{align*}
where $[\cdot]_i$ denotes the $i$-th component of a vector and $h_{ii}$ is the $(i,i)$-th entry of the matrix $\Hmat$.
\end{lemma}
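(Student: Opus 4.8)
The plan is to sandwich the quantity $f(\alphavec^i) - f(\alphavec^*)$ between the two sides of the claimed inequality, using throughout that $f$ is a convex quadratic, so that for any feasible $\alphavec$, any index $i$, and any $t \in \Rbb$,
\[
f(\alphavec + t\evec_i) = f(\alphavec) + t\,[\Hmat\alphavec + \qvec]_i + \tfrac{1}{2} h_{ii} t^2,
\]
where $\evec_i$ is the $i$-th unit vector and $[\Hmat\alphavec + \qvec]_i = [\nabla f(\alphavec)]_i$.

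\emph{Upper side.} The point $\alphavec^* - \alpha_i^*\evec_i$ has $i$-th coordinate $0$ and agrees with $\alphavec^*$ in every other coordinate, hence it is feasible for the problem defining $\alphavec^i$; by optimality of $\alphavec^i$ this gives $f(\alphavec^i) \le f(\alphavec^* - \alpha_i^*\evec_i)$. Expanding the right-hand side with $t = -\alpha_i^*$ in the identity above yields $f(\alphavec^* - \alpha_i^*\evec_i) = f(\alphavec^*) + \tfrac12 h_{ii}{\alpha_i^*}^2 - \alpha_i^*[\Hmat\alphavec^* + \qvec]_i$, so
\[
f(\alphavec^i) - f(\alphavec^*) \;\le\; \tfrac12 h_{ii}{\alpha_i^*}^2 - \alpha_i^*[\Hmat\alphavec^* + \qvec]_i,
\]
which is exactly the right-hand side of the lemma.

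\emph{Lower side.} Consider the scalar quadratic $t \mapsto f(\alphavec^i + t\evec_i)$; its unconstrained minimizer is $t^\star = -[\Hmat\alphavec^i + \qvec]_i / h_{ii}$, with minimum value $f(\alphavec^i) - [\Hmat\alphavec^i + \qvec]_i^2 / (2h_{ii})$. The crucial sub-step is that $\alphavec^i + t^\star\evec_i$ is feasible for (\ref{eq: dual linear problem}), i.e. $t^\star \in [0, u_i]$. Granting this, global optimality of $\alphavec^*$ forces $f(\alphavec^*) \le f(\alphavec^i + t^\star\evec_i) = f(\alphavec^i) - [\Hmat\alphavec^i + \qvec]_i^2 / (2h_{ii})$, that is, $\frac{[\Hmat\alphavec^i + \qvec]_i^2}{2h_{ii}} \le f(\alphavec^i) - f(\alphavec^*)$. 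Chaining this with the upper side gives the lemma.

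\emph{Main obstacle.} Everything rests on the feasibility $t^\star \in [0, u_i]$. For $t^\star \ge 0$ it is enough that $[\Hmat\alphavec^i + \qvec]_i \le 0$, which I would get by contradiction: if instead $[\Hmat\alphavec^i + \qvec]_i > 0$, then $\alphavec^i$ already satisfies the KKT conditions of the full box-constrained problem --- for $j \neq i$ these coincide with the optimality conditions of the problem defining $\alphavec^i$, and for coordinate $i$ the sign condition at the lower bound $0$ holds --- so $\alphavec^i$ is itself a global optimum and $\alpha_i^* = 0$; hence in the substantive case $\alpha_i^* > 0$ one has $[\Hmat\alphavec^i + \qvec]_i \le 0$. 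For $t^\star \le u_i$ I would use that $g(s) := \min\{ f(\alphavec) : \zerovec \le \alphavec \le \uvec,\ \alpha_i = s \}$ is convex in $s$, with one-sided derivative at $0$ equal to $[\Hmat\alphavec^i + \qvec]_i$ and curvature at most $h_{ii}$ (a Schur-complement estimate), so that its minimizer over $[0,u_i]$, namely $\alpha_i^*$, satisfies $\alpha_i^* \ge -[\Hmat\alphavec^i + \qvec]_i / h_{ii} = t^\star$, whence $t^\star \le \alpha_i^* \le u_i$; for ODM, where $u_i = \infty$, this part is vacuous. This feasibility/curvature bookkeeping --- together with isolating the degenerate case $\alpha_i^* = 0$, in which $\alphavec^i = \alphavec^*$ --- is the only real work; the two sandwiching inequalities themselves are one-line quadratic expansions.
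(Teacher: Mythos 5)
Your sandwich
\[
f(\alphavec^i) - \min_t f(\alphavec^i + t\evec_i) \;\le\; f(\alphavec^i) - f(\alphavec^*) \;\le\; f(\alphavec^* - \alpha_i^*\evec_i) - f(\alphavec^*)
\]
is exactly the chain the paper writes down, and your two one-line quadratic expansions of the end terms coincide with the paper's computation. The only substantive difference is that the paper asserts the left inequality with an \emph{unconstrained} $\min_t$ and never checks that the line minimizer $t^\star = -[\Hmat\alphavec^i+\qvec]_i/h_{ii}$ lies in $[0,u_i]$; you correctly isolate this as the crux, and your KKT argument that $[\Hmat\alphavec^i+\qvec]_i\le 0$ whenever $\alpha_i^*>0$ is sound.

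However, the two cases you set aside are precisely where the argument, and indeed the stated inequality itself, break down, so the proof cannot be closed along these lines. (i) The case $\alpha_i^*=0$ is not harmlessly degenerate: there $\alphavec^i=\alphavec^*$, the right-hand side is $0$, while the left-hand side is $[\Hmat\alphavec^*+\qvec]_i^2/(2h_{ii})$, and KKT at the lower bound only gives $[\Hmat\alphavec^*+\qvec]_i\ge 0$, not $=0$. Concretely, $m=1$, $\Hmat=1$, $\qvec=1$, $u_1=\infty$ gives $\alpha_1^*=0$, left side $1/2$, right side $0$. (ii) Your curvature argument only yields $\alpha_i^*\ge\min(t^\star,u_i)$, so it is circular as a proof that $t^\star\le u_i$; and when the upper bound is active the inequality can again fail ($m=1$, $\Hmat=1$, $\qvec=-10$, $u_1=1$ gives left side $50$, right side $9.5$). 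For ODM, where $\uvec=\boldsymbol{\infty}$, only (i) is relevant; for the ODM$^L$ dual both occur. The paper's own proof fails silently at the same first inequality in both examples, so the remedy is not more care in the feasibility bookkeeping but an additional hypothesis in the lemma (e.g.\ restricting to indices with $0<\alpha_i^*<u_i$); note that the two leave-one-out theorems that follow also invoke the lemma at $\alpha_i^*=0$ (their case~1), so the issue propagates downstream.
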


\begin{proof}
According to the definition in (\ref{eq: solution}), we have
\begin{align}
\label{eq: relation}
f(\alphavec^i) - \min_t f(\alphavec^i + t \evec_i) \leq f(\alphavec^i) - f(\alphavec^*) \leq f(\alphavec^* - \alpha_i^* \evec_i) - f(\alphavec^*),
\end{align}
where $\evec_i$ denotes a vector with $1$ in the $i$-th coordinate and $0$'s elsewhere.

Note that
\begin{align*}
f(\alphavec^i) - \min_t f(\alphavec^i + t \evec_i) & = f(\alphavec^i) - \min_t \left\{ f(\alphavec^i) + \frac{t^2}{2} h_{ii} + t {\alphavec^i}^\top \Hmat \evec_i + t \qvec^\top \evec_i \right\} \\
& = - \min_t \left\{ \frac{t^2}{2} h_{ii} + t (\Hmat \alphavec^i + \qvec)^\top \evec_i \right\} \\
& = \frac{[\Hmat \alphavec^i + \qvec]_i^2}{2 h_{ii}}
\end{align*}
and
\begin{align*}
f(\alphavec^* - \alpha_i^* \evec_i) - f(\alphavec^*) = \frac{{\alpha_i^*}^2}{2} h_{ii} - \alpha_i^* (\Hmat \alphavec^* + \qvec)^\top \evec_i = \frac{{\alpha_i^*}^2}{2} h_{ii} - \alpha_i^* [\Hmat \alphavec^* + \qvec]_i
\end{align*}
combine with (\ref{eq: relation}), it is shown that
\begin{align*}
\frac{[\Hmat \alphavec^i + \qvec]_i^2}{2 h_{ii}} \leq \frac{{\alpha_i^*}^2}{2} h_{ii} - \alpha_i^* [\Hmat \alphavec^* + \qvec]_i.
\end{align*}
\end{proof}

Based Lemma~\ref{lem: bound for qp}, we derive the following two bounds for ODM$^L$ and ODM on the expectation of error according to the leave-one-out cross-validation estimate, which is an unbiased estimate of the probability of test error. As shown in~\citep{Luntz1969On},
\begin{align} \label{eq: leave-one-out}
\Ebb[R(\alphavec)] = \frac{\Ebb[\Lcal((\xvec_1,y_1), \ldots, (\xvec_m, y_m))]}{m},
\end{align}
where $\Lcal((\xvec_1,y_1), \ldots, (\xvec_m, y_m))$ is the number of errors in the leave-one-out procedure.

\begin{theorem} \label{thm: bound ODM$^L$}
Let $\alphavec$ denote the optimal solution of the dual problem of ODM$^L$, then we have
\begin{align} \label{eq: bound ODM$^L$}
\Ebb[R(\alphavec)] \leq \frac{\Ebb[\sum_{i \in I_1} \alpha_i^* h_{ii} + |I_2|]}{m},
\end{align}
where $I_1 \equiv \{ i \ | \ 0 < \alpha_i^* < C/m \}$, $I_2 \equiv \{ i \ | \ \alpha_i^* = C/m \}$.
\end{theorem}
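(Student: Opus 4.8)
The plan is to bound, for each fixed training sample, the leave-one-out error count $\Lcal$ by $\sum_{i\in I_1}\alpha_i^* h_{ii}+|I_2|$ and then pass to expectations using (\ref{eq: leave-one-out}). For each $i$, write $\alphavec^i=\argmin_{\zerovec\leq\alphavec\leq\uvec,\,\alpha_i=0}f(\alphavec)$ for the dual solution with the $i$-th variable clamped to zero; since $f$ restricted to $\alpha_i=0$ only involves the remaining variables through the submatrix of $\Hmat$ and subvector of $\qvec$ indexed by $\{1,\dots,m\}\setminus\{i\}$, the predictor built from $\alphavec^i$ plays the role of the model obtained by holding out $(\xvec_i,y_i)$, so that $\Lcal$ counts the indices $i$ at which $\alphavec^i$ misclassifies $(\xvec_i,y_i)$.

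The first real step is to translate ``misclassification'' into the language of Lemma~\ref{lem: bound for qp}. Using the recovered weight vector $\wvec=\Xmat(\Imat+\Amat\Gmat)^{-1}\Ymat(\frac{\lambda_2}{m}\evec+\alphavec)$ together with $\Hmat=\Ymat\Gmat(\Imat+\Amat\Gmat)^{-1}\Ymat$ and $\qvec=\frac{\lambda_2}{m}\Hmat\evec-\evec$, a direct computation gives the identity $y_i\wvec^\top\phi(\xvec_i)=[\Hmat\alphavec+\qvec]_i+1$ for every feasible $\alphavec$. Hence $(\xvec_i,y_i)$ is a leave-one-out error exactly when $[\Hmat\alphavec^i+\qvec]_i\leq-1$, in which case $[\Hmat\alphavec^i+\qvec]_i^2\geq1$.

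Next I would split on the value of $\alpha_i^*$. If $\alpha_i^*=0$, then $\alphavec^*$ is feasible and optimal for the clamped problem, so $\alphavec^i=\alphavec^*$; the KKT conditions of the ODM$^L$ primal then force $\beta_i=C/m$ and $\xi_i=0$, hence $y_i\wvec^\top\phi(\xvec_i)\geq1>0$, and $i$ is never a leave-one-out error. If $i\in I_1$, i.e. $0<\alpha_i^*<C/m$, the KKT conditions instead give $\xi_i=0$ and $y_i\wvec^\top\phi(\xvec_i)=1$, i.e. $[\Hmat\alphavec^*+\qvec]_i=0$; plugging this into Lemma~\ref{lem: bound for qp} collapses its right-hand side to $\frac{1}{2}{\alpha_i^*}^2h_{ii}^2$, so $[\Hmat\alphavec^i+\qvec]_i^2\leq{\alpha_i^*}^2h_{ii}^2$, and combined with the previous step any leave-one-out error with $i\in I_1$ must satisfy $\alpha_i^*h_{ii}\geq1$. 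Because $\Hmat$ is positive semidefinite --- it equals $\Ymat\Xmat^\top\Qmat^{-1}\Xmat\Ymat$ and $\Qmat\succeq\Imat$ since $m\Imat-\yvec\yvec^\top\succeq\zerovec$ --- every $\alpha_i^*h_{ii}$ is nonnegative, so the number of leave-one-out errors with index in $I_1$ is at most $\sum_{i\in I_1}\alpha_i^*h_{ii}$. Bounding the at most $|I_2|$ errors with $i\in I_2$ trivially and adding the three cases gives $\Lcal\leq\sum_{i\in I_1}\alpha_i^*h_{ii}+|I_2|$, after which (\ref{eq: leave-one-out}) yields the claimed inequality.

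The step I expect to be the main obstacle is the identification used in the first paragraph: for ODM$^L$ the matrix $\Hmat$ depends, through $\Amat=\frac{2\lambda_1(m\Imat-\yvec\yvec^\top)}{m^2}$ and the box width $C/m$, on the sample size and on $\yvec\yvec^\top$, so a genuinely retrained leave-one-out model uses a slightly different $\Hmat$ and box than the submatrix/subvector obtained by clamping $\alpha_i=0$. One therefore has to argue that this discrepancy does not affect the counting argument, or else read the theorem as a statement about the clamped solutions $\alphavec^i$. The remaining ingredients --- the margin identity $y_i\wvec^\top\phi(\xvec_i)=[\Hmat\alphavec+\qvec]_i+1$, the KKT case analysis, and the nonnegativity of $h_{ii}$ --- are routine.
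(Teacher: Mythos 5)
Your proposal is correct and follows essentially the same route as the paper: the identity $[\Hmat\alphavec+\qvec]_i = y_i\wvec^\top\phi(\xvec_i)-1$, Lemma~\ref{lem: bound for qp}, the three-way case analysis on $\alpha_i^*$ via complementary slackness, and summation followed by (\ref{eq: leave-one-out}). The only differences are cosmetic (a transcription slip writing the right-hand side as $\tfrac{1}{2}{\alpha_i^*}^2h_{ii}^2$ before multiplying through by $2h_{ii}$), and the leave-one-out identification issue you flag at the end is likewise left unaddressed in the paper's own proof.
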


\begin{proof}
According to the derivation in Section~\ref{sec: kernelODM_0}, for ODM$^L$ we have
\begin{align*}
[\Hmat \alphavec + \qvec]_i & = \left[ \Hmat \alphavec + \frac{\lambda_2}{m} \Hmat \evec - \evec \right]_i \\
& = \left[ \Ymat \Xmat^\top \Xmat (\Imat + \Amat \Gmat)^{-1} \Ymat \left( \alphavec + \frac{\lambda_2}{m} \evec \right) - \evec \right]_i,
\end{align*}
further note that
\begin{align*}
\wvec = \Xmat (\Imat + \Amat \Gmat)^{-1} \Ymat \left( \alphavec + \frac{\lambda_2}{m} \evec \right),
\end{align*}
so it is shown that
\begin{align*}
[\Hmat \alphavec + \qvec]_i = [\Ymat \Xmat^\top \wvec - \evec]_i = y_i \xvec_i^\top \wvec - 1.
\end{align*}

Suppose the corresponding solution of $\alphavec^*$ and $\alphavec^i$ for the primal problem of ODM$^L$ are $\wvec^*$ and $\wvec^i$, respectively. According to Lemma~\ref{lem: bound for qp} we have
\begin{align*}
\frac{(y_i \xvec_i^\top \wvec^i - 1)^2}{2 h_{ii}} \leq \frac{{\alpha_i^*}^2}{2} h_{ii} - \alpha_i^* (y_i \xvec_i^\top \wvec^* - 1).
\end{align*}

1) $\alpha_i^* = 0$. The right-hand side equals 0, which indicates that the left-hand side must equal 0, i.e., all these examples will be correctly classified by $\wvec^i$.

2) $0 < \alpha_i^* < C/m$. According to the complementary slackness conditions, in this case we have $y_i \xvec_i^\top \wvec^* = 1$. For any misclassified example $(\xvec_i, y_i)$, i.e., $y_i \xvec_i^\top \wvec^i < 0$, we have $1 \leq \alpha_i^* h_{ii}$.

3) $\alpha_i^* = C/m$. All these examples may be misclassified in the leave-one-out procedure.

So we have
\begin{align*}
\Lcal((\xvec_1,y_1), \ldots, (\xvec_m, y_m)) \leq \sum_{i \in I_1} \alpha_i^* h_{ii} + |I_2|,
\end{align*}
where $I_1 \equiv \{ i \ | \ 0 < \alpha_i^* < C/m \}$, $I_2 \equiv \{ i \ | \ \alpha_i^* = C/m \}$. Take expectation on both side and with (\ref{eq: leave-one-out}), we get that (\ref{eq: bound ODM$^L$}) holds.
\end{proof}

\begin{theorem} \label{thm: bound ODM}
Let $\alphavec$ denote the optimal solution of the dual problem of ODM, then we have
\begin{align} \label{eq: bound ODM}
\Ebb[R(\alphavec)] \leq \frac{\Ebb[\sum_{i \in I_1} \alpha_i^* \left(\| \xvec_i \|^2 + \frac{m}{2C1}\right) + \sum_{i \in I_2} \alpha_i^* \left(\| \xvec_i \|^2 + \frac{m}{2C2}\right) + D(|I_1| - |I_2|)]}{m},
\end{align}
where $I_1 \equiv \{ i \ | \ \alpha_i^* > 0 \ and \ 1 \leq i \leq m \}$, $I_2 \equiv \{ i \ | \ \alpha_i^* > 0 \ and \ m+1 \leq i \leq 2m \}$.
\end{theorem}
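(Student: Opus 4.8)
\emph{Proof plan.}\quad I would run the scheme of Theorem~\ref{thm: bound ODM$^L$}, coupling Lemma~\ref{lem: bound for qp} with the leave-one-out identity (\ref{eq: leave-one-out}); the only genuinely new feature is that in the ODM dual (\ref{eq: final ODM}) each training example $(\xvec_k,y_k)$ is attached to \emph{two} dual coordinates — index $k$, carrying the multiplier $\zeta_k$ of the lower constraint, and index $m+k$, carrying the multiplier $\beta_k$ of the upper constraint — so Lemma~\ref{lem: bound for qp} must be invoked at both coordinates and the leave-one-out loss of example $k$ must be controlled by the two contributions together. Throughout I would work in the natural regime $0\le D<1$.

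First I would express the dual gradient and its diagonal in primal terms. Writing $\alphavec=[\zetavec^\top,\betavec^\top]^\top$, using $\wvec=\Xmat\Ymat(\zetavec-\betavec)$ from (\ref{eq: KKT1 ODM}) together with the block form of $\Hmat$ in (\ref{eq: final ODM}) and $\Qmat=\Ymat\Xmat^\top\Xmat\Ymat$, one obtains for $1\le k\le m$
\begin{align*}
[\Hmat\alphavec+\qvec]_k &= y_k\xvec_k^\top\wvec+\frac{m}{2C_1}\zeta_k+D-1, &
[\Hmat\alphavec+\qvec]_{m+k} &= -y_k\xvec_k^\top\wvec+\frac{m}{2C_2}\beta_k+D+1,
\end{align*}
while $y_k^2=1$ gives $h_{kk}=\|\xvec_k\|^2+\tfrac{m}{2C_1}$ and $h_{m+k,m+k}=\|\xvec_k\|^2+\tfrac{m}{2C_2}$. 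Stationarity of the box-constrained dual gives $[\Hmat\alphavec^*+\qvec]_i=0$ at every active coordinate, so the right-hand side of Lemma~\ref{lem: bound for qp} collapses to $\tfrac12(\alpha_i^*)^2h_{ii}$ there (and to $0$ elsewhere). Finally, (\ref{eq: KKT2 ODM})--(\ref{eq: KKT3 ODM}) and the primal form (\ref{eq: linear ODM}) give $\zeta_k^*>0\Leftrightarrow y_k\xvec_k^\top\wvec^*<1-D$ and $\beta_k^*>0\Leftrightarrow y_k\xvec_k^\top\wvec^*>1+D$, so at most one of $\zeta_k^*,\beta_k^*$ is positive and $I_1=\{i\le m:\alpha_i^*>0\}$, $I_2=\{m<i\le 2m:\alpha_i^*>0\}$ pick out disjoint families of examples.

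Then I would run the leave-one-out count one example at a time. Fix $k$. If $\zeta_k^*=\beta_k^*=0$, then $\wvec^*$ does not involve $\xvec_k$, the optimum is unchanged when $(\xvec_k,y_k)$ is removed, and $1-D\le y_k\xvec_k^\top\wvec^*\le 1+D$ (with $D<1$) forces a correct prediction, so such examples cost nothing. If $\zeta_k^*>0$, apply Lemma~\ref{lem: bound for qp} at coordinate $k$ to the primal counterpart $\wvec^k$ of $\alphavec^k$: a leave-one-out error ($y_k\xvec_k^\top\wvec^k<0$) gives $(y_k\xvec_k^\top\wvec^k+D-1)^2>(1-D)^2$, hence $\tfrac{(1-D)^2}{2h_{kk}}\le\tfrac12(\zeta_k^*)^2h_{kk}$, i.e.
\begin{align*}
1\le\zeta_k^*h_{kk}+D=\alpha_k^*\Big(\|\xvec_k\|^2+\frac{m}{2C_1}\Big)+D,
\end{align*}
which (being $\ge D\ge 0$) also dominates the indicator when the example is classified correctly. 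The case $\beta_k^*>0$ is symmetric at coordinate $m+k$: a leave-one-out error gives $(-y_k\xvec_k^\top\wvec^{m+k}+D+1)^2>(D+1)^2$, hence $1\le\beta_k^*h_{m+k,m+k}-D=\alpha_{m+k}^*(\|\xvec_k\|^2+\tfrac{m}{2C_2})-D$. Summing these per-example bounds over $k$ and collecting the $\pm D$ terms gives
\begin{align*}
\Lcal((\xvec_1,y_1),\ldots,(\xvec_m,y_m))\le\sum_{i\in I_1}\alpha_i^*\Big(\|\xvec_i\|^2+\frac{m}{2C_1}\Big)+\sum_{i\in I_2}\alpha_i^*\Big(\|\xvec_i\|^2+\frac{m}{2C_2}\Big)+D(|I_1|-|I_2|),
\end{align*}
with $\xvec_i$ read as $\xvec_{i-m}$ for $i\in I_2$; taking expectations and invoking (\ref{eq: leave-one-out}) delivers (\ref{eq: bound ODM}).

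The hard part will be the step just glossed: justifying that the primal solution $\wvec^k$ (resp.\ $\wvec^{m+k}$) of the \emph{single}-coordinate-restricted problem supplied by Lemma~\ref{lem: bound for qp} may stand in for the \emph{genuine} leave-one-out predictor trained on $S\setminus\{(\xvec_k,y_k)\}$, which corresponds to zeroing \emph{both} $\zeta_k$ and $\beta_k$. I would argue: if the other coordinate already vanishes at $\alphavec^k$, then $\alphavec^k$ is exactly the leave-one-out dual solution and $\wvec^k$ the leave-one-out predictor; if it does not vanish, the stationarity equation for coordinate $m+k$ forces $y_k\xvec_k^\top\wvec^k>1+D>0$, and a monotonicity argument in that coordinate then yields $y_k\xvec_k^\top\wvec^{-k}\ge y_k\xvec_k^\top\wvec^k>0$, so example $k$ is predicted correctly and contributes nothing anyway (the coordinate $m+k$ case is mirror-image). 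One also has to check that the $-D|I_2|$ correction is legitimate — that the $I_2$ contributions stay non-negative for correctly classified examples — and that the constants $D-1$ and $D+1$ in the dual gradient are precisely what make the two square-completions line up. That correspondence, not the algebra, is the delicate part; the remainder is the bookkeeping already carried out for Theorem~\ref{thm: bound ODM$^L$}.
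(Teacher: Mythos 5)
Your proposal follows essentially the same route as the paper's own proof: the same translation of $[\Hmat\alphavec+\qvec]_i$ and $h_{ii}$ into primal quantities via (\ref{eq: KKT1 ODM})--(\ref{eq: KKT3 ODM}), the same application of Lemma~\ref{lem: bound for qp} with complementary slackness separately on the two blocks of coordinates, and the same summation followed by the leave-one-out identity (\ref{eq: leave-one-out}). The two subtleties you flag --- that Lemma~\ref{lem: bound for qp} zeroes only one of the two dual coordinates attached to example $k$ whereas the genuine leave-one-out predictor zeroes both, and that the $I_2$ contributions must stay non-negative for correctly classified examples --- are real, but the paper's proof silently passes over both of them as well, so they do not separate your argument from the published one.
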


\begin{proof}
Denote $\alphavec^\top = [\zetavec^\top, \betavec^\top]$, according to the derivation in Section~\ref{sec: kernelODM} we have
\begin{align*}
[\Hmat \alphavec + \qvec]_i & =
\begin{bmatrix}
\Qmat (\zetavec - \betavec) + \frac{2C_1}{m} \zetavec + (D - 1) \evec \\
- \Qmat (\zetavec - \betavec) + \frac{2C_2}{m} \betavec + (D + 1) \evec
\end{bmatrix}_i \\ & = \begin{bmatrix}
\Ymat \Xmat^\top \Xmat \Ymat (\zetavec - \betavec) + \frac{2C_1}{m} \zetavec + (D - 1) \evec \\
- \Ymat \Xmat^\top \Xmat \Ymat (\zetavec - \betavec) + \frac{2C_2}{m} \betavec + (D + 1) \evec
\end{bmatrix}_i,
\end{align*}
further note that $\wvec = \Xmat \Ymat (\zetavec - \betavec)$ and the definition in (\ref{eq: solution}), so it is shown that
\begin{align*}
[\Hmat \alphavec^i + \qvec]_i =
\begin{cases}
y_i \xvec_i^\top \wvec^i - (1 - D), & 1 \leq i \leq m. \\
- y_i \xvec_i^\top \wvec^i + (1 + D), & m+1 \leq i \leq 2m. \\
\end{cases}
\end{align*}
and
\begin{align*}
[\Hmat \alphavec^* + \qvec]_i =
\begin{cases}
y_i \xvec_i^\top \wvec^* + \xi_i - (1 - D), & 1 \leq i \leq m. \\
- y_i \xvec_i^\top \wvec^* + \epsilon_i + (1 + D), & m+1 \leq i \leq 2m. \\
\end{cases}
\end{align*}
according to Lemma~\ref{lem: bound for qp} we have
\begin{align*}
\frac{((1 - D) - y_i \xvec_i^\top \wvec^i)^2}{2 (\| \xvec_i \|^2 + \frac{m}{2C1})} & \leq \frac{{\alpha_i^*}^2}{2} \left(\| \xvec_i \|^2 + \frac{m}{2C1}\right) \\
& \ \ \ - \alpha_i^* (y_i \xvec_i^\top \wvec^* + \xi_i - (1 - D)), \ \ 1 \leq i \leq m. \\
\frac{((1 + D) - y_i \xvec_i^\top \wvec^i)^2}{2 (\| \xvec_i \|^2 + \frac{m}{2C2})} & \leq \frac{{\alpha_i^*}^2}{2} \left(\| \xvec_i \|^2 + \frac{m}{2C2}\right) \\
& \ \ \ - \alpha_i^* (- y_i \xvec_i^\top \wvec^* + \epsilon_i + (1 + D)), \ \ m+1 \leq i \leq 2m.
\end{align*}

1) $\alpha_i^* = 0$. The right-hand side equals 0, which indicates that the left-hand side must equal 0, i.e., all these examples will be correctly classified by $\wvec^i$.

2) $\alpha_i^* > 0$. According to the complementary slackness conditions, in this case the second term of the right-hand side must equal 0. For any misclassified example $(\xvec_i, y_i)$, i.e., $y_i \xvec_i^\top \wvec^i < 0$, we have
\begin{align*}
1 \leq \alpha_i^* \left(\| \xvec_i \|^2 + \frac{m}{2C1}\right) + D, & \ \ 1 \leq i \leq m. \\
1 \leq \alpha_i^* \left(\| \xvec_i \|^2 + \frac{m}{2C2}\right) - D, & \ \ m+1 \leq i \leq 2m.
\end{align*}

So we have
\begin{align*}
\Lcal((\xvec_1,y_1), \ldots, (\xvec_m, y_m)) & \leq \sum_{i \in I_1} \alpha_i^* \left(\| \xvec_i \|^2 + \frac{m}{2C1}\right) \\
& \ \ \ + \sum_{i \in I_2} \alpha_i^* \left(\| \xvec_i \|^2 + \frac{m}{2C2}\right) + D(|I_1| - |I_2|),
\end{align*}
where $I_1 \equiv \{ i \ | \ \alpha_i^* > 0 \ and \ 1 \leq i \leq m \}$, $I_2 \equiv \{ i \ | \ \alpha_i^* > 0 \ and \ m+1 \leq i \leq 2m \}$. Take expectation on both side and with (\ref{eq: leave-one-out}), we get that (\ref{eq: bound ODM}) holds.
\end{proof}

\section{Empirical Study} \label{sec: experiments}

In this section, we empirically evaluate the effectiveness of our methods on a broad range of data sets. We first introduce the experimental settings in Section~\ref{sec: experimental setup}, and then compare ODM$^L$ and ODM with SVM and Linear Discriminant Analysis (LDA) in Section~\ref{sec: results regular} and Section~\ref{sec: results large}. In addition, we also study the cumulative margin distribution produced by ODM$^L$, ODM and SVM in Section~\ref{sec: plot margin distribution}. The computational cost is presented in Section~\ref{sec: plot time}.

\begin{table}[!htb]
\scriptsize
\caption{Characteristics of experimental data sets.}
\vskip 0.1in
\centering
\scalebox{1.05}[1.05]{
\begin{tabular}{ c | l c c | l c c}
\hline \hline \noalign{\smallskip}
Scale & Dataset & \#Instance & \#Feature & Dataset & \#Instance & \#Feature \\
\noalign{\smallskip} \hline \noalign{\smallskip}
\emph{regular} & \emph{colon-cancer} & {\myfont 62} & {\myfont 2,000} & \emph{cylinder-bands} & {\myfont 277} & {\myfont 39} \\
& \emph{echocardiogram} & {\myfont 62} & {\myfont 8} & \emph{heart-c} & {\myfont 296} & {\myfont 13} \\
& \emph{balloons} & {\myfont 76} & {\myfont 4} & \emph{haberman} & {\myfont 306} & {\myfont 14} \\
& \emph{hepatitis} & {\myfont 80} & {\myfont 19} & \emph{liverDisorders} & {\myfont 345} & {\myfont 6} \\
& \emph{promoters} & {\myfont 106} & {\myfont 57} & \emph{house-votes} & {\myfont 435} & {\myfont 16} \\
& \emph{planning} & {\myfont 182} & {\myfont 12} & \emph{vehicle} & {\myfont 435} & {\myfont 16} \\
& \emph{colic} & {\myfont 188} & {\myfont 13} & \emph{clean1} & {\myfont 476} & {\myfont 166} \\
& \emph{parkinsons} & {\myfont 195} & {\myfont 22} & \emph{wdbc} & {\myfont 569} & {\myfont 14} \\
& \emph{colic.ORIG} & {\myfont 205} & {\myfont 17} & \emph{isolet} & {\myfont 600} & {\myfont 51} \\
& \emph{sonar} & {\myfont 208} & {\myfont 60} & \emph{credit-a} & {\myfont 653} & {\myfont 15} \\
& \emph{house} & {\myfont 232} & {\myfont 16} & \emph{austra} & {\myfont 690} & {\myfont 15} \\
& \emph{vote} & {\myfont 232} & {\myfont 16} & \emph{australian} & {\myfont 690} & {\myfont 42} \\
& \emph{heart-h} & {\myfont 261} & {\myfont 10} & \emph{diabetes} & {\myfont 768} & {\myfont 8} \\
& \emph{heart} & {\myfont 270} & {\myfont 9} & \emph{fourclass} & {\myfont 862} & {\myfont 2} \\
& \emph{heart-statlog} & {\myfont 270} & {\myfont 13} & \emph{credit-g} & {\myfont 1,000} & {\myfont 20} \\
& \emph{breast} & {\myfont 277} & {\myfont 9} & \emph{german} & {\myfont 1,000} & {\myfont 59} \\
\noalign{\smallskip} \hline \noalign{\smallskip}
\emph{large} & \emph{farm-ads} & {\myfont 4,143} & {\myfont 54,877} & \emph{real-sim} & {\myfont 72,309} & {\myfont 20,958} \\
 & \emph{news20} & {\myfont 19,996} & {\myfont 1,355,191} & \emph{mini-boo-ne} & {\myfont 130,064} & {\myfont 50} \\
 & \emph{adult-a} & {\myfont 32,561} & {\myfont 123} & \emph{ijcnn1} & {\myfont 141,691} & {\myfont 22} \\
 & \emph{w8a} & {\myfont 49,749} & {\myfont 300} & \emph{rcv1} & {\myfont 697,641} & {\myfont 47,236} \\
 & \emph{cod-rna} & {\myfont 59,535} & {\myfont 8} & \emph{kdd2010} & {\myfont 8,407,752} & {\myfont 20,216,830} \\
\noalign{\smallskip} \hline \hline
\end{tabular}}
\label{table data set info}
\end{table}

\begin{table}[!htb]
\scriptsize
\caption{Accuracy (mean$\pm$std.) comparison on regular scale data sets. Linear kernel is used. The best accuracy on each data set is bolded. $\bullet$/$\circ$ indicates the performance is significantly better/worse than SVM (paired $t$-tests at 95\% significance level). The win/tie/loss counts are summarized in the last row.}
\centering
\vskip 0.1in
\scalebox{1.2}[1.2]{
\begin{tabular}{ l | l l l l }
\hline \hline \noalign{\smallskip}
Dataset & \multicolumn{1}{c}{SVM} & \multicolumn{1}{c}{LDA} & \multicolumn{1}{c}{ODM$^L$} & \multicolumn{1}{c}{ODM} \\
\noalign{\smallskip} \hline \noalign{\smallskip}
\emph{colon-cancer} & {\myfont 0.808$\pm$0.070} & {\myfont 0.640$\pm$0.136}$\circ$ & {\myfont 0.806$\pm$0.070} & {\myfontb 0.823$\pm$0.059}$\bullet$ \\
\emph{echocardiogram} & {\myfont 0.663$\pm$0.069} & {\myfontb 0.705$\pm$0.082}$\bullet$ & {\myfont 0.704$\pm$0.061}$\bullet$ & {\myfontb 0.705$\pm$0.055}$\bullet$ \\
\emph{balloons} & {\myfont 0.703$\pm$0.050} & {\myfont 0.719$\pm$0.067} & {\myfont 0.694$\pm$0.046} & {\myfontb 0.720$\pm$0.058}$\bullet$ \\
\emph{hepatitis} & {\myfont 0.808$\pm$0.055} & {\myfont 0.751$\pm$0.052}$\circ$ & {\myfont 0.821$\pm$0.058}$\bullet$ & {\myfontb 0.848$\pm$0.040}$\bullet$ \\
\emph{promoters} & {\myfont 0.723$\pm$0.071} & {\myfont 0.595$\pm$0.068}$\circ$ & {\myfont 0.721$\pm$0.069} & {\myfontb 0.736$\pm$0.067} \\
\emph{planning} & {\myfont 0.683$\pm$0.031} & {\myfont 0.463$\pm$0.052}$\circ$ & {\myfontb 0.706$\pm$0.034}$\bullet$ & {\myfontb 0.706$\pm$0.034}$\bullet$ \\
\emph{colic} & {\myfont 0.814$\pm$0.035} & {\myfont 0.766$\pm$0.029}$\circ$ & {\myfont 0.832$\pm$0.026}$\bullet$ & {\myfontb 0.839$\pm$0.035}$\bullet$ \\
\emph{parkinsons} & {\myfont 0.846$\pm$0.038} & {\myfont 0.718$\pm$0.041}$\circ$ & {\myfontb 0.865$\pm$0.030}$\bullet$ & {\myfont 0.858$\pm$0.027}$\bullet$ \\
\emph{colic.ORIG} & {\myfont 0.618$\pm$0.027} & {\myfont 0.550$\pm$0.054}$\circ$ & {\myfont 0.619$\pm$0.042} & {\myfontb 0.633$\pm$0.033}$\bullet$ \\
\emph{sonar} & {\myfont 0.725$\pm$0.039} & {\myfont 0.613$\pm$0.055}$\circ$ & {\myfont 0.736$\pm$0.036} & {\myfontb 0.741$\pm$0.035}$\bullet$ \\
\emph{house} & {\myfont 0.942$\pm$0.015} & {\myfont 0.845$\pm$0.026}$\circ$ & {\myfontb 0.968$\pm$0.011}$\bullet$ & {\myfont 0.965$\pm$0.010}$\bullet$ \\
\emph{vote} & {\myfont 0.934$\pm$0.022} & {\myfont 0.847$\pm$0.023}$\circ$ & {\myfontb 0.970$\pm$0.014}$\bullet$ & {\myfont 0.968$\pm$0.013}$\bullet$ \\
\emph{heart-h} & {\myfont 0.807$\pm$0.028} & {\myfont 0.800$\pm$0.032} & {\myfont 0.803$\pm$0.030} & {\myfontb 0.812$\pm$0.023}$\bullet$ \\
\emph{heart} & {\myfont 0.799$\pm$0.029} & {\myfontb 0.809$\pm$0.032} & {\myfont 0.791$\pm$0.030} & {\myfont 0.801$\pm$0.027} \\
\emph{heart-statlog} & {\myfont 0.826$\pm$0.031} & {\myfont 0.792$\pm$0.024}$\circ$ & {\myfont 0.821$\pm$0.022} & {\myfontb 0.836$\pm$0.028}$\bullet$ \\
\emph{breast} & {\myfont 0.717$\pm$0.033} & {\myfont 0.703$\pm$0.026}$\circ$ & {\myfont 0.725$\pm$0.027}$\bullet$ & {\myfontb 0.732$\pm$0.035}$\bullet$ \\
\emph{cylinder-bands} & {\myfont 0.684$\pm$0.039} & {\myfont 0.634$\pm$0.030}$\circ$ & {\myfontb 0.708$\pm$0.038}$\bullet$ & {\myfontb 0.708$\pm$0.033}$\bullet$ \\
\emph{heart-c} & {\myfont 0.795$\pm$0.029} & {\myfont 0.759$\pm$0.032}$\circ$ & {\myfont 0.800$\pm$0.026} & {\myfontb 0.806$\pm$0.030}$\bullet$ \\
\emph{haberman} & {\myfont 0.734$\pm$0.030} & {\myfont 0.568$\pm$0.076}$\circ$ & {\myfontb 0.738$\pm$0.020} & {\myfont 0.734$\pm$0.018} \\
\emph{liverDisorders} & {\myfont 0.675$\pm$0.030} & {\myfont 0.515$\pm$0.035}$\circ$ & {\myfontb 0.681$\pm$0.026} & {\myfont 0.672$\pm$0.030} \\
\emph{house-votes} & {\myfont 0.935$\pm$0.012} & {\myfont 0.838$\pm$0.014}$\circ$ & {\myfontb 0.942$\pm$0.010}$\bullet$ & {\myfontb 0.942$\pm$0.013}$\bullet$ \\
\emph{vehicle} & {\myfont 0.959$\pm$0.012} & {\myfont 0.738$\pm$0.033}$\circ$ & {\myfont 0.959$\pm$0.013} & {\myfontb 0.961$\pm$0.011} \\
\emph{clean1} & {\myfont 0.803$\pm$0.035} & {\myfont 0.539$\pm$0.036}$\circ$ & {\myfont 0.814$\pm$0.019}$\bullet$ & {\myfontb 0.825$\pm$0.024}$\bullet$ \\
\emph{wdbc} & {\myfont 0.963$\pm$0.012} & {\myfont 0.887$\pm$0.022}$\circ$ & {\myfont 0.968$\pm$0.011}$\bullet$ & {\myfontb 0.969$\pm$0.011}$\bullet$ \\
\emph{isolet} & {\myfont 0.995$\pm$0.003} & {\myfont 0.935$\pm$0.026}$\circ$ & {\myfont 0.997$\pm$0.002}$\bullet$ & {\myfontb 0.998$\pm$0.003}$\bullet$ \\
\emph{credit-a} & {\myfont 0.861$\pm$0.014} & {\myfont 0.783$\pm$0.030}$\circ$ & {\myfontb 0.864$\pm$0.013}$\bullet$ & {\myfont 0.862$\pm$0.014} \\
\emph{austra} & {\myfont 0.857$\pm$0.013} & {\myfont 0.667$\pm$0.071}$\circ$ & {\myfont 0.859$\pm$0.015} & {\myfontb 0.862$\pm$0.013}$\bullet$ \\
\emph{australian} & {\myfont 0.844$\pm$0.019} & {\myfont 0.693$\pm$0.058}$\circ$ & {\myfont 0.866$\pm$0.014}$\bullet$ & {\myfontb 0.867$\pm$0.014}$\bullet$ \\
\emph{diabetes} & {\myfont 0.671$\pm$0.019} & {\myfontb 0.701$\pm$0.020}$\bullet$ & {\myfont 0.670$\pm$0.021} & {\myfont 0.671$\pm$0.020} \\
\emph{fourclass} & {\myfontb 0.724$\pm$0.014} & {\myfont 0.723$\pm$0.015} & {\myfont 0.723$\pm$0.014} & {\myfont 0.723$\pm$0.014} \\
\emph{credit-g} & {\myfont 0.726$\pm$0.041} & {\myfont 0.678$\pm$0.015}$\circ$ & {\myfont 0.745$\pm$0.015}$\bullet$ & {\myfontb 0.747$\pm$0.016}$\bullet$ \\
\emph{german} & {\myfont 0.711$\pm$0.030} & {\myfont 0.673$\pm$0.015}$\circ$ & {\myfont 0.738$\pm$0.016}$\bullet$ & {\myfontb 0.743$\pm$0.015}$\bullet$ \\
\noalign{\smallskip} \hline \noalign{\smallskip}
Avg. & \multicolumn{1}{c}{0.792} & \multicolumn{1}{c}{0.708} & \multicolumn{1}{c}{0.802} & \multicolumn{1}{c}{0.807} \\\noalign{\smallskip} \hline \noalign{\smallskip}
SVM: w/t/l & & \multicolumn{1}{c}{26/4/2} & \multicolumn{1}{c}{0/15/17} & \multicolumn{1}{c}{0/8/24} \\
\noalign{\smallskip} \hline \hline
\end{tabular}}
\label{table linear regular}
\end{table}

\begin{table}[!htb]
\scriptsize
\caption{Accuracy (mean$\pm$std.) comparison on regular scale data sets. RBF kernel is used. The best accuracy on each data set is bolded. $\bullet$/$\circ$ indicates the performance is significantly better/worse than SVM (paired $t$-tests at 95\% significance level). The win/tie/loss counts are summarized in the last row.}
\centering
\vskip 0.1in
\scalebox{1.2}[1.2]{
\begin{tabular}{ l | l l l l }
\hline \hline \noalign{\smallskip}
Dataset & \multicolumn{1}{c}{SVM} & \multicolumn{1}{c}{LDA} & \multicolumn{1}{c}{ODM$^L$} & \multicolumn{1}{c}{ODM} \\
\noalign{\smallskip} \hline \noalign{\smallskip}
\emph{colon-cancer} & {\myfont 0.710$\pm$0.105} & {\myfont 0.619$\pm$0.140}$\circ$ & {\myfont 0.725$\pm$0.090} & {\myfontb 0.744$\pm$0.074}$\bullet$ \\
\emph{echocardiogram} & {\myfont 0.703$\pm$0.062} & {\myfontb 0.740$\pm$0.063}$\bullet$ & {\myfont 0.717$\pm$0.067} & {\myfont 0.731$\pm$0.066}$\bullet$ \\
\emph{balloons} & {\myfont 0.699$\pm$0.051} & {\myfont 0.702$\pm$0.079} & {\myfont 0.739$\pm$0.059}$\bullet$ & {\myfontb 0.746$\pm$0.045}$\bullet$ \\
\emph{hepatitis} & {\myfont 0.819$\pm$0.030} & {\myfont 0.705$\pm$0.069}$\circ$ & {\myfont 0.830$\pm$0.032}$\bullet$ & {\myfontb 0.837$\pm$0.033}$\bullet$ \\
\emph{promoters} & {\myfont 0.684$\pm$0.100} & {\myfont 0.623$\pm$0.065}$\circ$ & {\myfont 0.715$\pm$0.074}$\bullet$ & {\myfontb 0.742$\pm$0.065}$\bullet$ \\
\emph{planning} & {\myfont 0.708$\pm$0.035} & {\myfont 0.540$\pm$0.191}$\circ$ & {\myfont 0.707$\pm$0.034} & {\myfontb 0.710$\pm$0.032} \\
\emph{colic} & {\myfont 0.822$\pm$0.033} & {\myfont 0.826$\pm$0.022} & {\myfontb 0.841$\pm$0.018}$\bullet$ & {\myfontb 0.841$\pm$0.022}$\bullet$ \\
\emph{parkinsons} & {\myfont 0.929$\pm$0.029} & {\myfont 0.735$\pm$0.126}$\circ$ & {\myfont 0.927$\pm$0.029} & {\myfontb 0.933$\pm$0.026} \\
\emph{colic.ORIG} & {\myfont 0.638$\pm$0.043} & {\myfont 0.581$\pm$0.072}$\circ$ & {\myfont 0.641$\pm$0.044} & {\myfontb 0.647$\pm$0.040} \\
\emph{sonar} & {\myfont 0.842$\pm$0.034} & {\myfont 0.526$\pm$0.041}$\circ$ & {\myfont 0.846$\pm$0.032} & {\myfontb 0.857$\pm$0.029}$\bullet$ \\
\emph{house} & {\myfont 0.953$\pm$0.020} & {\myfont 0.874$\pm$0.022}$\circ$ & {\myfontb 0.964$\pm$0.013}$\bullet$ & {\myfont 0.961$\pm$0.015}$\bullet$ \\
\emph{vote} & {\myfont 0.946$\pm$0.016} & {\myfont 0.876$\pm$0.019}$\circ$ & {\myfontb 0.968$\pm$0.013}$\bullet$ & {\myfont 0.964$\pm$0.011}$\bullet$ \\
\emph{heart-h} & {\myfont 0.801$\pm$0.031} & {\myfont 0.804$\pm$0.030} & {\myfont 0.801$\pm$0.029} & {\myfontb 0.811$\pm$0.026}$\bullet$ \\
\emph{heart} & {\myfont 0.808$\pm$0.025} & {\myfont 0.786$\pm$0.037}$\circ$ & {\myfont 0.822$\pm$0.029}$\bullet$ & {\myfontb 0.838$\pm$0.022}$\bullet$ \\
\emph{heart-statlog} & {\myfont 0.815$\pm$0.027} & {\myfont 0.793$\pm$0.024}$\circ$ & {\myfont 0.829$\pm$0.026}$\bullet$ & {\myfontb 0.834$\pm$0.024}$\bullet$ \\
\emph{breast} & {\myfont 0.729$\pm$0.030} & {\myfont 0.706$\pm$0.031}$\circ$ & {\myfont 0.753$\pm$0.027}$\bullet$ & {\myfontb 0.755$\pm$0.027}$\bullet$ \\
\emph{cylinder-bands} & {\myfont 0.738$\pm$0.040} & {\myfont 0.632$\pm$0.058}$\circ$ & {\myfont 0.736$\pm$0.042} & {\myfontb 0.753$\pm$0.033}$\bullet$ \\
\emph{heart-c} & {\myfont 0.788$\pm$0.028} & {\myfont 0.722$\pm$0.026}$\circ$ & {\myfont 0.801$\pm$0.021}$\bullet$ & {\myfontb 0.808$\pm$0.027}$\bullet$ \\
\emph{haberman} & {\myfont 0.727$\pm$0.024} & {\myfont 0.528$\pm$0.103}$\circ$ & {\myfont 0.731$\pm$0.027} & {\myfontb 0.742$\pm$0.021}$\bullet$ \\
\emph{liverDisorders} & {\myfont 0.719$\pm$0.030} & {\myfont 0.563$\pm$0.036}$\circ$ & {\myfont 0.712$\pm$0.031} & {\myfontb 0.721$\pm$0.032} \\
\emph{house-votes} & {\myfont 0.945$\pm$0.013} & {\myfont 0.854$\pm$0.020}$\circ$ & {\myfont 0.949$\pm$0.012}$\bullet$ & {\myfontb 0.950$\pm$0.011}$\bullet$ \\
\emph{vehicle} & {\myfont 0.992$\pm$0.007} & {\myfont 0.692$\pm$0.023}$\circ$ & {\myfont 0.993$\pm$0.006} & {\myfontb 0.994$\pm$0.006} \\
\emph{clean1} & {\myfont 0.890$\pm$0.020} & {\myfont 0.671$\pm$0.060}$\circ$ & {\myfontb 0.891$\pm$0.024} & {\myfont 0.889$\pm$0.023} \\
\emph{wdbc} & {\myfont 0.951$\pm$0.011} & {\myfont 0.702$\pm$0.019}$\circ$ & {\myfont 0.961$\pm$0.010}$\bullet$ & {\myfontb 0.974$\pm$0.010}$\bullet$ \\
\emph{isolet} & {\myfontb 0.998$\pm$0.002} & {\myfont 0.892$\pm$0.072}$\circ$ & {\myfontb 0.998$\pm$0.002} & {\myfontb 0.998$\pm$0.002} \\
\emph{credit-a} & {\myfont 0.858$\pm$0.014} & {\myfont 0.797$\pm$0.022}$\circ$ & {\myfont 0.861$\pm$0.013} & {\myfontb 0.864$\pm$0.012}$\bullet$ \\
\emph{austra} & {\myfont 0.853$\pm$0.013} & {\myfont 0.583$\pm$0.055}$\circ$ & {\myfont 0.857$\pm$0.014}$\bullet$ & {\myfontb 0.862$\pm$0.015}$\bullet$ \\
\emph{australian} & {\myfont 0.815$\pm$0.014} & {\myfont 0.695$\pm$0.069}$\circ$ & {\myfont 0.854$\pm$0.016}$\bullet$ & {\myfontb 0.868$\pm$0.012}$\bullet$ \\
\emph{diabetes} & {\myfont 0.773$\pm$0.014} & {\myfont 0.686$\pm$0.015}$\circ$ & {\myfont 0.771$\pm$0.014} & {\myfontb 0.777$\pm$0.014}$\bullet$ \\
\emph{fourclass} & {\myfont 0.998$\pm$0.003} & {\myfont 0.717$\pm$0.013}$\circ$ & {\myfont 0.998$\pm$0.003} & {\myfontb 1.000$\pm$0.002}$\bullet$ \\
\emph{credit-g} & {\myfont 0.751$\pm$0.014} & {\myfont 0.625$\pm$0.107}$\circ$ & {\myfont 0.750$\pm$0.016} & {\myfontb 0.757$\pm$0.016}$\bullet$ \\
\emph{german} & {\myfont 0.731$\pm$0.019} & {\myfont 0.599$\pm$0.018}$\circ$ & {\myfont 0.743$\pm$0.016}$\bullet$ & {\myfontb 0.750$\pm$0.011}$\bullet$ \\
\noalign{\smallskip} \hline \noalign{\smallskip}
Avg. & \multicolumn{1}{c}{0.817} & \multicolumn{1}{c}{0.700} & \multicolumn{1}{c}{0.826} & \multicolumn{1}{c}{0.833} \\
\noalign{\smallskip} \hline \noalign{\smallskip}
SVM: w/t/l & & \multicolumn{1}{c}{28/3/1} & \multicolumn{1}{c}{0/17/15} & \multicolumn{1}{c}{0/7/25} \\
\noalign{\smallskip} \hline \hline
\end{tabular}}
\label{table kernel regular}
\end{table}

\begin{table}[!htb]
\scriptsize
\caption{Accuracy (mean$\pm$std.) comparison on large scale data sets. Linear kernel is used. The best accuracy on each data set is bolded. $\bullet$/$\circ$ indicates the performance is significantly better/worse than SVM (paired $t$-tests at 95\% significance level). The win/tie/loss counts are summarized in the last row. LDA did not return results on some data sets in 48 hours.}
\centering
\vskip 0.1in
\scalebox{1.2}[1.2]{
\begin{tabular}{ l | l l l l }
\hline \hline \noalign{\smallskip}
Dataset & \multicolumn{1}{c}{SVM} & \multicolumn{1}{c}{LDA} & \multicolumn{1}{c}{ODM$^L$} & \multicolumn{1}{c}{ODM} \\
\noalign{\smallskip} \hline \noalign{\smallskip}
\emph{farm-ads} & {\myfont 0.880$\pm$0.007} & \multicolumn{1}{c}{N/A} & {\myfont 0.890$\pm$0.008}$\bullet$ & {\myfontb 0.892$\pm$0.006}$\bullet$ \\
\emph{news20} & {\myfont 0.954$\pm$0.002} & \multicolumn{1}{c}{N/A} & {\myfontb 0.960$\pm$0.001}$\bullet$ & {\myfont 0.956$\pm$0.001}$\bullet$ \\
\emph{adult-a} & {\myfont 0.845$\pm$0.002} & {\myfont 0.719$\pm$0.003}$\circ$ & {\myfontb 0.846$\pm$0.003}$\bullet$ & {\myfontb 0.846$\pm$0.002}$\bullet$ \\
\emph{w8a} & {\myfontb 0.983$\pm$0.001} & {\myfont 0.510$\pm$0.006}$\circ$ & {\myfontb 0.983$\pm$0.001} & {\myfont 0.982$\pm$0.001} \\
\emph{cod-rna} & {\myfontb 0.899$\pm$0.001} & {\myfont 0.503$\pm$0.001}$\circ$ & {\myfontb 0.899$\pm$0.001} & {\myfontb 0.899$\pm$0.001} \\
\emph{real-sim} & {\myfont 0.961$\pm$0.001} & \multicolumn{1}{c}{N/A} & {\myfont 0.971$\pm$0.001}$\bullet$ & {\myfontb 0.972$\pm$0.001}$\bullet$ \\
\emph{mini-boo-ne} & {\myfont 0.855$\pm$0.005} & \multicolumn{1}{c}{N/A} & {\myfont 0.848$\pm$0.001}$\circ$ & {\myfontb 0.895$\pm$0.004}$\bullet$ \\
\emph{ijcnn1} & {\myfontb 0.921$\pm$0.003} & \multicolumn{1}{c}{N/A} & {\myfontb 0.921$\pm$0.002} & {\myfont 0.919$\pm$0.001} \\
\emph{rcv1} & {\myfont 0.969$\pm$0.000} & \multicolumn{1}{c}{N/A} & {\myfontb 0.977$\pm$0.000}$\bullet$ & {\myfontb 0.977$\pm$0.000}$\bullet$ \\
\emph{kdda2010} & {\myfont 0.852$\pm$0.000} & \multicolumn{1}{c}{N/A} & {\myfontb 0.881$\pm$0.001}$\bullet$ & {\myfont 0.880$\pm$0.001}$\bullet$ \\
\noalign{\smallskip} \hline \noalign{\smallskip}
Avg. & \multicolumn{1}{c}{0.912} & \multicolumn{1}{c}{0.577} & \multicolumn{1}{c}{0.918} & \multicolumn{1}{c}{0.922} \\\noalign{\smallskip} \hline \noalign{\smallskip}
SVM: w/t/l & & \multicolumn{1}{c}{0/0/3} & \multicolumn{1}{c}{1/3/6} & \multicolumn{1}{c}{0/3/7} \\
\noalign{\smallskip} \hline \hline
\end{tabular}}
\label{table linear large}
\end{table}

\subsection{Experimental Setup} \label{sec: experimental setup}

We evaluate the effectiveness of our proposed methods on thirty two regular scale data sets and ten large scale data sets, including both UCI data sets and real-world data sets like KDD2010\footnote{https://pslcdatashop.web.cmu.edu/KDDCup/downloads.jsp}. Table~\ref{table data set info} summarizes the statistics of these data sets. The data set size is ranged from 62 to more than 8,000,000, and the dimensionality is ranged from 2 to more than 20,000,000, covering a broad range of properties. All features are normalized into the interval $[0, 1]$. For each data set, half of examples are randomly selected as the training data, and the remaining examples are used as the testing data. For regular scale data sets, both linear and RBF kernels are evaluated. Experiments are repeated for 30 times with random data partitions, and the average accuracies as well as the standard deviations are recorded. For large scale data sets, linear kernel is evaluated. Experiments are repeated for 10 times with random data partitions, and the average accuracies (with standard deviations) are recorded.

ODM$^L$ and ODM are compared with standard SVM which ignores the margin distribution, and Linear Discriminant Analysis (LDA)~\citep{Fisher1936The}. For SVM and ODM$^L$, the regularization parameter $C$ is selected by 5-fold cross validation from $[10, 50, 100]$. In addition, the regularization parameters $\lambda_1$, $\lambda_2$ are selected from the set of $[2^{-8}, \ldots, 2^{-2}]$. For ODM, the regularization parameter $C_1$ and $C_2$ are selected from the set of $[2^0, \ldots, 2^{10}]$, while the parameter $D$ is selected from the set of $[0, 0.1, \ldots, 0.5]$. The parameters $\eta$ used for $ODM_{svrg}$ are set with the same setup in~\citep{Accelerating2013Johnson}. The width of the RBF kernel for SVM, LDA, ODM$^L$ and ODM are selected by 5-fold cross validation from the set of $[2^{-2}\delta, \ldots, 2^2\delta]$, where $\delta$ is the average distance between instances. All selections are performed on training sets.

\subsection{Results on Regular Scale Data Sets} \label{sec: results regular}

Tables~\ref{table linear regular} and~\ref{table kernel regular} summarize the results on thirty two regular scale data sets. As can be seen, the overall performance of our methods are superior or highly competitive to SVM. Specifically, for linear kernel, ODM$^L$/ODM performs significantly better than SVM on 17/24 over 32 data sets, respectively, and achieves the best accuracy on 31 data sets; for RBF kernel, ODM$^L$/ODM performs significantly better than SVM on 15/25 over 32 data sets, respectively, and achieves the best accuracy on 31 data sets. In addition, as can be seen, in comparing with standard SVM which does not consider margin distribution, the win/tie/loss counts show that ODM$^L$ and ODM are always better or comparable, never worse than SVM.

\subsection{Results on Large Scale Data Sets} \label{sec: results large}

Table~\ref{table linear large} summarizes the results on ten large scale data sets. LDA did not return results on some data sets due to the high computational cost. As can be seen, the overall performance of our methods are superior or highly competitive to SVM. Specifically, ODM$^L$/ODM performs significantly better than SVM on 6/7 over 10 data sets, respectively, and achieves the best accuracy on almost all data sets.

\subsection{Margin Distributions} \label{sec: plot margin distribution}

\begin{figure}[!htb]
\begin{center}
\centerline{\includegraphics[width=\columnwidth]{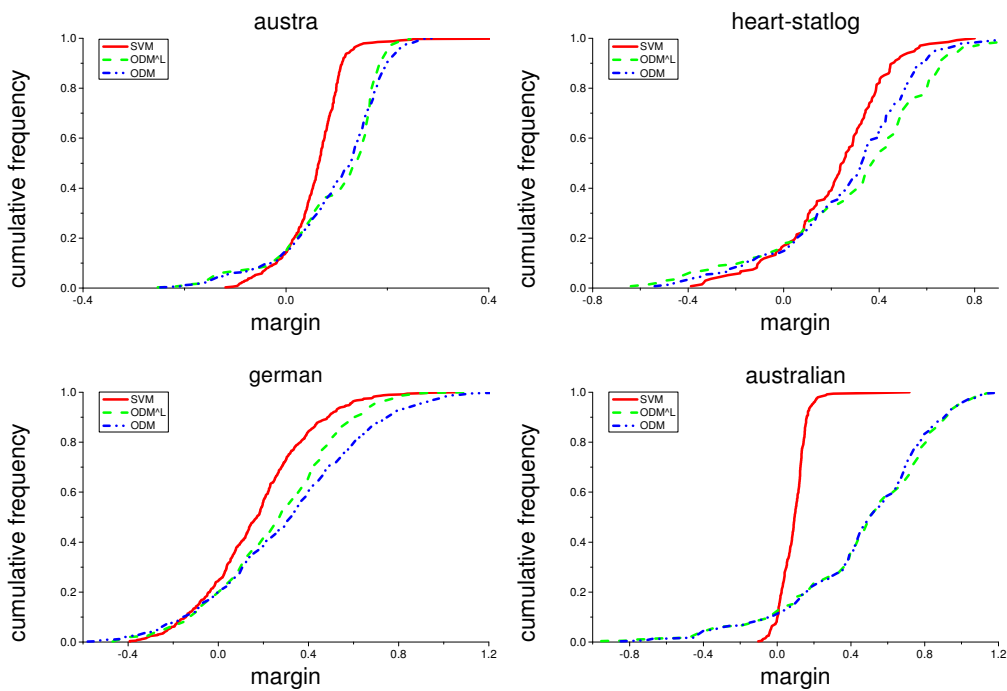}}
\caption{Cumulative frequency ($y$-axis) with respect to margin ($x$-axis) of SVM, ODM$^L$ and ODM on some representative regular scale data sets. The more right the curve, the larger the accumulated margin.}
\label{fig: plot margin distributions}
\end{center}
\end{figure}

Figure~\ref{fig: plot margin distributions} plots the cumulative margin distribution of SVM, ODM$^L$ and ODM on some representative regular scale data sets. The curves for other data sets are more or less similar. The point where a curve and the $x$-axis crosses is the corresponding minimum margin. As can be seen, our methods usually have a little bit smaller minimum margin than SVM, whereas the curve of ODM$^L$ and ODM generally lies on the right side, showing that the margin distribution of ODM$^L$ and ODM are generally better than that of SVM. In other words, for most examples, our methods generally produce a larger margin than SVM.

\subsection{Time Cost} \label{sec: plot time}

We compare the time cost of our methods and SVM on the ten large scale data sets. All the experiments are performed with MATLAB 2012b on a machine with 8$\times$2.60 GHz CPUs and 32GB main memory. The average CPU time (in seconds) on each data set is shown in Figure~\ref{fig: running time}. We denote SVM implemented by the LIBLINEAR~\citep{Fan2008LIBLINEAR} package as SVM$_l$ and SVM implemented by SGD\footnote{http://leon.bottou.org/projects/sgd} as SVM$_s$, respectively. It can be seen that, both SVM$_s$ and our methods are faster than SVM$_l$, owing to the use of SGD. ODM$^L$ and ODM are just slightly slower than SVM$_s$ on three data sets (adult-a, w8a and mini-boo-ne) but highly competitive with SVM$_s$ on the rest data sets. Note that both SVM$_l$ and SVM$_s$ are very fast implementations of SVMs; this shows that ODM$^L$ and ODM are also computationally efficient.

\begin{figure}[!htb]
\begin{center}
\centerline{\includegraphics[width=\columnwidth]{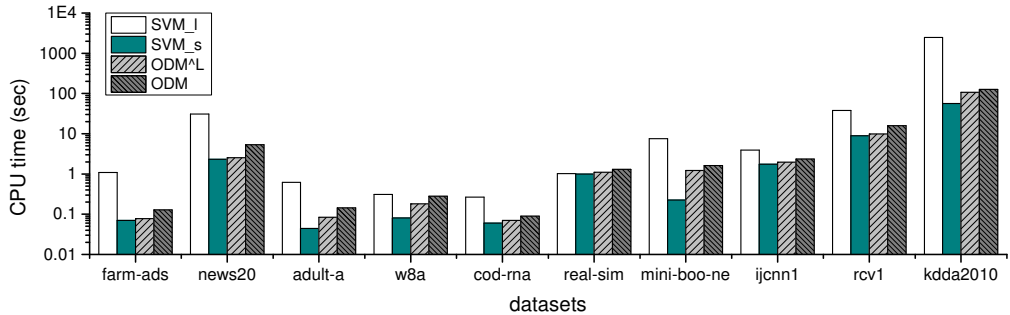}}
\caption{CPU time on the large scale data sets.}
\label{fig: running time}
\end{center}
\end{figure}

\section{Related Work} \label{sec: related work}

There are a few studies considered margin distribution in SVM-like algorithms~\citep{Garg2003Margin,Pelckmans2007A,Aiolli2008A}. Garg et al.~\citep{Garg2003Margin} proposed the Margin Distribution Optimization (MDO) algorithm which minimizes the sum of the cost of each instance, where the cost is a function which assigns larger values to instances with smaller margins. MDO can be viewed as a method of optimizing weighted margin combination, where the weights are related to the margins. The objective function optimized by MDO, however, is non-convex, and thus, it may get stuck in local minima. In addition, MDO can only be used for linear kernel.

Pelckmans et al.~\citep{Pelckmans2007A} proposed the Maximal Average Margin for Classifiers (MAMC) and it can be viewed as a special case of ODM$^L$ assuming that the margin variance is zero. MAMC has a closed-form solution, however, it will degenerate to a trivial solution when the classes are not with equal sizes.

Aiolli et al.~\citep{Aiolli2008A} proposed a Kernel Method for the direct Optimization of the Margin Distribution (KM-OMD) from a game theoretical perspective. Similar to MDO, this method also directly optimizes a weighted combination of margins over the training data, ignoring the influence of margin variances. Besides, this method considers hard-margin only, which may be another reason why it can't work well. It is noteworthy that the computational cost prohibits KM-OMD to be applied to large scale data.

The superiority of ODM$^L$ to the above methods have been presented in~\citep{Zhang2014Large}, so we don't choose them as compared methods in Section~\ref{sec: experiments}.

\section{Conclusions} \label{sec: conclusions}

Support vector machines work by maximizing the minimum margin. Recent theoretical results suggested that the margin distribution, rather than a single-point margin such as the minimum margin, is more crucial to the generalization performance. In this paper, we propose a new method, named Optimal margin Distribution Machine (ODM), which try to optimize the margin distribution by considering the margin mean and the margin variance simultaneously. Our method is a general learning approach which can be used in any place where SVM can be applied. Comprehensive experiments on thirty two regular scale data sets and ten large scale data sets validate the superiority of our method to SVM. In the future it will be interesting to generalize the idea of optimal margin distribution to other learning settings.

%

\bibliographystyle{elsarticle-harv}
\bibliography{Optimal_Margin_Distribution_Machine}

\end{document}